\definecolor{grey}{rgb}{0.8,0.8,0.8}
\definecolor{aqua}{rgb}{0, 1, 1}
\definecolor{steel}{rgb}{0.2734, 0.5078, 0.7031}
\definecolor{slate}{rgb}{0.1836, 0.3086, 0.3086}
\newcommand{\hlr}[2]{\setlength{\fboxsep}{0.3pt}\colorbox{red!#2}{\rule[-.05\baselineskip]{0pt}{.7\baselineskip}{#1}}}
\newcommand{\hlb}[2]{\setlength{\fboxsep}{0.3pt}\colorbox{aqua!#2}{\rule[-.05\baselineskip]{0pt}{.7\baselineskip}{#1}}}
\newcommand{\e}[1]{{\small $#1$}}
\DeclareMathOperator*{\argmin}{\arg\!\min}
\newcommand{\algname}{\textsc{A2r }}
\newcommand{\algnamens}{\textsc{A2r}}
\newtheorem{theorem}{Theorem}
\title{Understanding Interlocking Dynamics of\\ Cooperative Rationalization}
\author{%
  Mo Yu$^{1}$\thanks{Authors contributed equally to this paper. Work was done when SC was at MIT-IBM Watson AI Lab.} \qquad Yang Zhang$^{1*}$ \qquad Shiyu Chang$^{1,2*}$ \qquad Tommi S. Jaakkola$^{3}$ \\
  $^1$MIT-IBM Watson AI Lab \qquad $^2$UC Santa Barbara \qquad $^3$CSAIL MIT\\
  \texttt{yum@us.ibm.com}\quad \texttt{yang.zhang2@ibm.com} \quad \texttt{chang87@ucsb.edu} \\ \texttt{tommi@csail.mit.edu}}
\begin{document}

\maketitle

\begin{abstract}

Selective rationalization explains the prediction of complex neural networks by finding a small subset of the input that is sufficient to predict the neural model output. The selection mechanism is commonly integrated into the model itself by specifying a two-component cascaded system consisting of a rationale generator, which makes a binary selection of the input features (which is the rationale), and a predictor, which predicts the output based only on the selected features. The components are trained jointly to optimize prediction performance. In this paper, we reveal a major problem with such cooperative rationalization paradigm --- \emph{model interlocking}. Interlocking arises when the predictor overfits to the features selected by the generator thus reinforcing the generator's selection even if the selected rationales are sub-optimal. The fundamental cause of the interlocking problem is that the rationalization objective to be minimized is concave with respect to the generator’s selection policy. We propose a new rationalization framework, called \algnamens, which introduces a third component into the architecture, a predictor driven by soft attention as opposed to selection. The generator now realizes both soft and hard attention over the features and these are fed into the two different predictors. While the generator still seeks to support the original predictor performance, it also minimizes a gap between the two predictors. As we will show theoretically, since the attention-based predictor exhibits a better convexity property, \algname can overcome the concavity barrier. Our experiments on two synthetic benchmarks and two real datasets demonstrate that \algname can significantly alleviate the interlock problem and find explanations that better align with human judgments.\footnote{We release our code at \url{https://github.com/Gorov/Understanding_Interlocking}.}

\end{abstract}

\section{Introduction}
\label{sec:intro}

Selective rationalization \cite{bastings2019interpretable,carton2018extractive,chang2019game, chen2018learning,chen2018shapley,glockner2020you,lei2016rationalizing, li2016understanding,yu2019rethinking} explains the prediction of complex neural networks by finding a small subset of the input -- rationale -- that suffices on its own to yield the same outcome as to the original data.  To generate high-quality rationales, existing methods often train a cascaded system that consists of two components, \emph{i.e.}, a \emph{rationale generator} and a \emph{predictor}.  The generator selects a subset of the input explicitly (\emph{a.k.a.}, binarized selection), which is then fed to the predictor.  The predictor then predicts the output based only on the subset of features selected by the generator.  The rationale generator and the predictor are trained jointly to optimize the prediction performance.   Compared to many other interpretable methods \cite{bahdanau2014neural,kim2017structured,xu2015show,serrano2019attention} that rely on attention mechanism as a proxy of models' explanation, selective rationalization offers a unique advantage: certification of exclusion, \emph{i.e.}, any unselected input is guaranteed to have no contribution to prediction.  

However, binarized selective rationalization schemes are notoriously hard to train \cite{bastings2019interpretable,yu2019rethinking}.  To overcome training obstacles, previous works have considered using smoothed gradient estimations (\emph{e.g.} gradient straight-through \cite{bengio2013estimating} or Gumbel softmax \cite{jang2016categorical}), introducing additional components to control the complement of the selection \cite{carton2018extractive,yu2019rethinking}, adopting different updating dynamics between the generator and the predictor \cite{chang2019game}, using rectified continuous random variables to handle the constrained optimization in training~\cite{bastings2019interpretable}, \emph{etc.}
In practice, these solutions are still insufficient.  They either still require careful tuning or are at a cost of reduced predictive accuracy.  

In this paper, we reveal a major training problem of selective rationalization that has been largely overlooked --- \emph{model interlocking}.  Intuitively, this problem arises because the predictor only sees what the generator selects during training, and tends to overfit to the selection of the generator. As a result, even if the generator selects a sub-optimal rationale, the predictor can still produce a lower prediction loss when given this sub-optimal rationale than when given the optimal rationale that it has never seen. As a result, the generator's selection of the sub-optimal rationale will be reinforced.  In the end, both the rationale generator and the predictor will be trapped in a sub-optimal equilibrium, which hurts both model's predictive accuracy and the quality of generated rationales.   

By investigating the training objective of selective rationalization theoretically, we found that the fundamental cause of the problem of interlocking is that the rationalization objective we aim to minimize is undesirably \emph{concave} with respect to the rationale generator's policy, which leads to many sub-optimal corner solutions.  On the other hand, although the attention-based models (\emph{i.e.}, via soft selection) produce much less faithful explanations and do not have the nice property of certification of exclusion, their optimization objective has a better convexity property with respect to the attention weights under certain assumptions, and thus would not suffer from the interlocking problem.

Motivated by these observations, we propose a new rationalization framework, called \algname (attention-to-rationale), which combines the advantages of both the attention model (convexity) and binarized rationalization (faithfulness) into one.  Specifically, our model consists of a generator, and two predictors. One predictor, called \emph{attention-based predictor}, operates on the soft-attention, and the other predictor, called \emph{binarized predictor}, operates on the binarized rationales.  The attention as used by the attention-based predictor is tied to the rationale selection probability as used by the binarized predictor.  During training,  the generator aims to improve both predictors' performance while minimizing their prediction gap.   As we will show theoretically, the proposed rationalization scheme can overcome the concavity of the original setup, and thus can avoid being trapped in sub-optimal rationales.  In addition, during inference time, we only keep the binarized predictor to ensure the faithfulness of the generated explanations.  We conduct experiments on two synthetic benchmarks and two real datasets.  The results demonstrate that our model can significantly alleviate the problem of interlocking and find explanations that better align with human judgments.

\section{Related Work}
{\bf Selective rationalization: } \cite{lei2016rationalizing} proposes the first generator-predictor framework for rationalization.  Following this work, new game-theoretic frameworks were proposed to encourage different desired properties of the selected rationales, such as optimized Shapley structure scores~\cite{chen2018shapley}, comprehensiveness~\cite{yu2019rethinking},  multi-aspect supports~\cite{antognini2021rationalization, chang2019game} and invariance~\cite{chang2020invariant}.  Another fundamental direction is to overcome the training difficulties. \cite{bao2018deriving} replaces policy gradient with Gumbel softmax.  \cite{yu2019rethinking} proposes to first pre-train the predictor,  and then perform end-to-end training.  \cite{chang2019game} adopts different updating dynamics between the generator and the predictor.  \cite{bastings2019interpretable} replaces the Bernoulli sampling distributions with rectified continuous random variables to facilitate constrained optimization.  \cite{sha2020learning} proposes to enhance the training objective with an adversarial information calibration according to a black-box predictor.  However, these methods cannot address the problem of interlocking. 

{\bf Attention as a proxy of explanation:} Model's attention \cite{bahdanau2014neural,kim2017structured,xu2015show} could serve as a proxy of the rationale.  Although attention is easy to obtain, it lacks faithfulness.  An input associated with low attention weight can still significantly impact the prediction.  In addition, recent works \cite{bastings2020elephant, jain2019attention, pruthi2020learning, serrano2019attention,wiegreffe2019attention} also find that the same prediction on an input could be generated by totally different attentions, which limits its applicability to explaining neural predictions.  To improve the faithfulness of attentions, \cite{mohankumar2020towards,tutek2020staying} regularize the hidden representations on which the attention is computed over;  \cite{glockner2020you} applies attention weights on losses of pre-defined individual rationale candidates' predictions. Nevertheless, rationales remain to be more faithful explanations due to their certification of exclusion.

\cite{guerreiro2021spectra,treviso2020explanation} force the sparsity of the attention with sparsemax~\cite{martins2016softmax}, so as to promote the faithfulness of their attention as rationales.
The interlocking problem still persists in this framework, because the loss landscape remains concave (refer to our arguments in Section~\ref{subsec:toy_example}\&\ref{subsec:concavity_of_rationalization}).
Specifically, since the predictor would not see the sentences that receive 0 attention weights, it tends to underfit these sentences. As a result, the generator does not have the incentive to assign positive weights to the sentences that are previously assigned zero weights, thus is prone to selecting the same sentences.

{\bf Model interpretability beyond selective rationalization: } There are other popular interpretability frameworks besides selective rationalization. Module networks \cite{andreas2016learning,andreas2016neural,johnson2017inferring} compose appropriate neural modules following a logical program to complete the task. Their applicability is relatively limited, due to the requirement of pre-defined modules and programs.   Evaluating feature importance with gradient information \cite{bastings2020elephant,li2016visualizing,simonyan2013deep, sundararajan2017axiomatic} is another popular method.  Though \cite{bastings2020elephant} discusses several advantages of gradient-based methods over rationalization, they are post-hoc and cannot impose structural constraints on the explanation.  Other lines of work that provide post-hoc explanations include local perturbations \cite{kononenko2010efficient, lundberg2017unified}; locally fitting interpretable models~\cite{alvarez2018towards,ribeiro2016should}; and generating explanations in the form of edits to inputs that change model prediction to the contrast case~\cite{ross2020explaining}.

\section{Selective Rationalization and Interlocking}
\label{sec:analysis}

\begin{figure}[t!]
    \centering
    \includegraphics[width=0.95\linewidth]{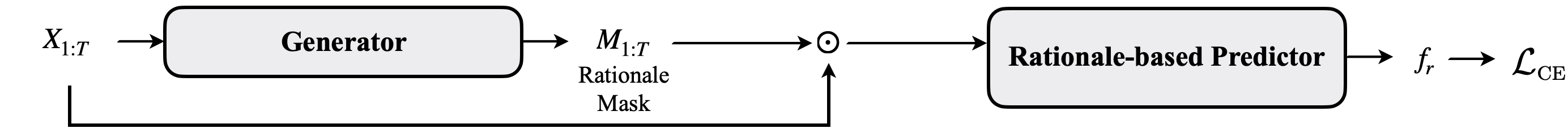}
    \caption{\small{A conventional selective rationalization framework.}}
    \label{fig:baseline}
\end{figure}

In this section, we will formally analyze the problem of interlocking in conventional selective rationalization frameworks. Throughout this section, upper-cased letters, \emph{i.e.}, \e{\bm A} and \e{A}, represent random vectors (bolded) and random values (unbolded) respectively; lower cased letters, \emph{i.e.}, $\bm a$ and $a$, represent deterministic vectors (bolded) and values (unbolded) respectively. 
Vectors with a colon subscript, \emph{i.e.}, \e{\bm a_{1:T}}, represent a concatenation of \e{\bm a_1} to \e{\bm a_T}, \emph{i.e.}, \e{[\bm a_1; \cdots; \bm a_T]}.

\subsection{Overview of Selective Rationalization}
\label{subsec:rationale_overview}

Consider a classification problem, \e{(\bm X, Y)}, where \e{\bm X = \bm X_{1:T}} is the input feature, and $Y$ is the discrete class label. In NLP applications, \e{\bm X_{1:T}} can be understood as a series of $T$ words/sentences. The goal of selective rationalization is to identify a \emph{binary} mask, \e{\bm M \in \{0, 1\}^T}, that applies to the input features to form a rationale vector, \e{\bm Z}, as an explanation of \e{Y}. Formally, the rationale vector \e{\bm Z} is defined as
\begin{equation}
\small
    \bm Z = \bm M \circ \bm X \equiv [M_1 \bm X_1, \cdots, M_T \bm X_T].
    \label{eq:rationale_def}
\end{equation}
Conventionally, \e{\bm Z} is determined by maximizing the mutual information between \e{\bm Z} and \e{Y}, \emph{i.e.},
\begin{equation}
\small
    \max_{\bm M} I(Y; \bm M \circ \bm X), \quad \mbox{s.t. }\bm M \in \mathcal{M},
    \label{eq:mmi}
\end{equation}
where \e{\mathcal{M}} refers to a constraint set, such as the sparsity constraint and a continuity constraint, requiring that the selected rationale should be a small and continuous subset of the input features. 

One way of learning to extract the rationale under this criterion is to introduce a game-theoretic framework (see Figure~\ref{fig:baseline}) consisting of two players, a rationale generator and a predictor.  The rationale generator selects a subset of input as rationales and the predictor makes the prediction based only on the rationales. The two players cooperate to maximize the prediction accuracy, so the rationale generator would need to select the most informative input subset.

Specifically, the rationale generator generates a probability distribution, \e{\bm \pi}, for the masks, based on the input features \e{\bm X}. Then, the mask \e{\bm M} is randomly drawn from the distribution \e{\bm \pi}. To simplify our exposition, we focus on the case that \e{\bm X_i} represents a sentence and only one of the \e{T} sentences is selected as a rationale. In this case, \e{\bm M} is a one-hot vector, and \e{\bm \pi} is a multinomial distribution.  Formally, the mask \e{\bm M} is generated as follows
\begin{equation}
    \small
    \bm M \sim \bm \pi(\bm X) = [\pi_1(\bm X), \cdots, \pi_T(\bm X)], \quad \mbox{where } \pi_i(\bm X) =  p(\bm M = \bm e_i | \bm X)], \nonumber
\end{equation}
and \e{\bm e_i} denotes a \e{T}-dimensional one-hot vector, with the $i$-th dimension equal to one. 
The generalization to making multiple selections will be discussed in Section~\ref{ssec:method}.

After the mask is generated, the predictor, \e{\bm f_r(\cdot; \bm \theta_r)} (the subscript \e{r} stands for rationale to differentiate from the attention-based predictor introduced later), then predicts the probability distribution of \e{Y} based only on \e{\bm Z = \bm M \circ \bm X}, \emph{i.e.},
\begin{equation}
    \small
    \bm f_r(\bm Z; \bm \theta_r) = [\hat{p}(Y = 1 | \bm Z), \cdots, \hat{p}(Y=c | \bm Z)],
    \label{eq:hard_predictor_def}
\end{equation}
where \e{\hat{p}} represents a predicted distribution, and \e{\bm \theta_r} denotes the parameters of the predictor.

The generator and the predictor are trained jointly to minimize the cross-entropy loss of the prediction:
\begin{equation}
    \small
    \min_{\bm \pi(\cdot), \bm \theta_r} \mathcal{L}_r(\bm \pi, \bm \theta_r), \quad \mbox{where } \mathcal{L}_r(\bm \pi, \bm \theta_r)  = \mathbb{E}_{\substack{\bm X, Y \sim \mathcal{D}_{tr} \\ \bm M \sim \bm \pi(\bm X)}} [\ell(Y, \bm f_r(\bm M \circ \bm X; \bm \theta_r))].
    \label{eq:rat_prob}
\end{equation}
\e{\mathcal{D}_{tr}} denotes the training set; \e{\ell(\cdot, \cdot)} denotes the cross entropy loss. 
It can be shown \cite{chen2018learning} that, if \e{\bm \pi(\cdot)} and \e{\bm f_r(\cdot; \bm \theta_r)} both have sufficient representation power, the globally optimal \e{\bm \pi(\bm X)} of Equation~\eqref{eq:rat_prob} would generate masks \e{\bm M} that are globally optimal under Equation~\eqref{eq:mmi}.


\subsection{Interlocking: A Toy Example}
\label{subsec:toy_example}

Despite the nice guarantee of its global optimum solution, the rationalization framework in Equation~\eqref{eq:rat_prob} suffers from the problem of being easily trapped into poor local minima, a problem we refer to as \emph{interlocking}. To help readers understand the nature of this problem, we would like to start with a toy example, where the input consists of two sentences, \e{\bm X_1} and \e{\bm X_2}. We assume that \e{\bm X_1} is the more informative (in terms of predicting \e{Y}) sentence between the two, so the optimal solution for the rationale generator \e{\bm \pi} is to always select \e{\bm X_1} (\emph{i.e.} \e{\pi_1=1}, and \e{\pi_2 = 0}).

However, assume, for some reason, that the generator is initialized so poorly that it only selects \e{\bm X_2}, and that the predictor has been trained to make the prediction based only on \e{\bm X_2}. In this case, we will show that it is very hard for the generator-predictor to escape from this poor local minimum thus it fails to converge to the globally optimal solution of selecting \e{\bm X_1}. Since the predictor underfits to \e{\bm X_1}, it will produce a large prediction error when \e{\bm X_1} is fed. As a result, the rationale generator would stick with selecting \e{\bm X_2} because \e{\bm X_2} yields a smaller prediction error than \e{\bm X_1}. The predictor, in turn, would keep overfitting to \e{\bm X_2} and underfitting to \e{\bm X_1}. In short, both players lock the other player from escaping from the poor solution, hence the name interlocking.

\begin{wraptable}{r}{0.53\textwidth}
\vspace{-0.1in}
    \small
    \centering
    \caption{\small{An example payoff (negative loss) table of the accordance game between the generator (Gen) and the predictor (Pred), where the interlocking problem is manifested as multiple Nash Equilibria.}}
    \label{tab:payoff}
    \begin{tabular}{cc||cc}
        \hline
        &&\multicolumn{2}{c}{\textbf{Pred.}} \\
        && \textit{Overfit to} \e{\bm X_1} & \textit{Overfit to} \e{\bm X_2} \\
        \hline\hline
        \multirow{2}{*}{\textbf{Gen.}} & \textit{Select} $\bm X_1$ & $(-1, -1)$ & $(-10, -10)$  \\
         & \textit{Select} \e{\bm X_2} & $(-20, -20)$ & $(-2, -2)$ \\
         \hline
    \end{tabular}
    \vspace{-0.05in}
\end{wraptable}
The problem of interlocking can also be manifested by an accordance game, where the generator has two strategies, \emph{select} \e{\bm X_1} and \emph{select} \e{\bm X_2}, and the predictor also has two strategies, \emph{overfit to} \e{\bm X_1} and \emph{overfit to} \e{\bm X_2}. An example payoff table is shown in Table~\ref{tab:payoff}. As can be seen, (\emph{select} \e{\bm X_1}, \emph{overfit to} \e{\bm X_1}) has the highest payoff, and thus is the optimal solution for both players. However, (\emph{select} \e{\bm X_2}, \emph{overfit to} \e{\bm X_2}) also constitutes a Nash equilibrium, which is locally optimal.

\subsection{Interlocking and Concave Minimization}
\label{subsec:concavity_of_rationalization}

To understand the fundamental cause of the interlocking problem, rewrite the optimization problem in Equation~\eqref{eq:rat_prob} into a nested form:
\begin{equation}
\small
    \min_{\bm \pi(\cdot), \bm \theta_r} \mathcal{L}_r(\bm \pi, \bm \theta_r) = \min_{\bm \pi(\cdot)} \min_{\bm \theta_r} \mathcal{L}_r(\bm \pi, \bm \theta_r) = \min_{\bm \pi(\cdot)} \mathcal{L}_r(\bm \pi, \bm \theta_r^*(\bm \pi)),
    \label{eq:nested_obj}
\end{equation}
\begin{equation}
\small
    \mbox{where}\quad \bm \theta_r^*(\bm \pi) = \argmin_{\bm \theta_r} \mathcal{L}_r(\bm \pi, \bm \theta_r).
    \label{eq:f*_def}
\end{equation}
Furthermore, denote
\begin{equation}
    \small
    \mathcal{L}_r^*(\bm \pi) = \mathcal{L}_r(\bm \pi, \bm \theta_r^*(\bm \pi)).
    \label{eq:L*_def}
\end{equation}
Then, the problem of finding the optimal rationale boils down to finding the global minimum of \e{\mathcal{L}_r^*(\bm \pi)}. In order to achieve good convergence properties,  \e{\mathcal{L}_r^*(\bm \pi)} would ideally be convex with respect to \e{\bm \pi}. However, the following theorem states the opposite.
\begin{theorem}
    \e{\mathcal{L}_r^*(\bm \pi)} is concave with respect to \e{\bm \pi}.
    \label{thm:concave}
\end{theorem}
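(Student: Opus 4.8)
The plan is to exploit the fact that, for any fixed predictor, the expected training loss is \emph{affine} in the generator's policy $\bm \pi$, and then invoke the standard convex-analytic fact that a pointwise infimum of a family of affine functions is concave.

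First, I would fix the predictor parameters $\bm \theta_r$ and unfold the expectation over the mask. Since $\bm M$ is drawn from the multinomial $\bm \pi(\bm X)$ and equals the one-hot vector $\bm e_i$ with probability $\pi_i(\bm X)$, the loss in Equation~\eqref{eq:rat_prob} becomes
$$\mathcal{L}_r(\bm \pi, \bm \theta_r) = \mathbb{E}_{\bm X, Y}\Bigl[\sum_{i=1}^T \pi_i(\bm X)\, \ell\bigl(Y, \bm f_r(\bm e_i \circ \bm X; \bm \theta_r)\bigr)\Bigr].$$
For fixed $\bm \theta_r$ the coefficients $\ell(Y, \bm f_r(\bm e_i \circ \bm X; \bm \theta_r))$ do not depend on $\bm \pi$, so the bracketed quantity is linear in the coordinates $\pi_i(\bm X)$; the outer expectation over $(\bm X, Y)$ is an integral and therefore preserves affineness. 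Hence, regarding $\bm \pi$ as the collection of values $\{\pi_i(\bm x)\}$, the map $\bm \pi \mapsto \mathcal{L}_r(\bm \pi, \bm \theta_r)$ is affine for every $\bm \theta_r$.

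Second, I would recall that by definition $\mathcal{L}_r^*(\bm \pi) = \min_{\bm \theta_r}\mathcal{L}_r(\bm \pi, \bm \theta_r)$, which exhibits $\mathcal{L}_r^*$ as the pointwise infimum, indexed by $\bm \theta_r$, of the affine functions just identified. For any admissible $\bm \pi_1, \bm \pi_2$ and $\lambda \in [0,1]$, affineness of each member gives $\mathcal{L}_r(\lambda \bm \pi_1 + (1-\lambda)\bm \pi_2, \bm \theta_r) = \lambda\,\mathcal{L}_r(\bm \pi_1, \bm \theta_r) + (1-\lambda)\,\mathcal{L}_r(\bm \pi_2, \bm \theta_r)$, and taking the infimum over $\bm \theta_r$ together with the superadditivity of the infimum ($\inf(a+b)\ge \inf a + \inf b$) yields $\mathcal{L}_r^*(\lambda \bm \pi_1 + (1-\lambda)\bm \pi_2) \ge \lambda\,\mathcal{L}_r^*(\bm \pi_1) + (1-\lambda)\,\mathcal{L}_r^*(\bm \pi_2)$, which is exactly concavity.

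The step I would treat most carefully is the domain over which these statements are made: $\bm \pi$ is a function $\bm \pi(\cdot)$ of the input rather than a finite-dimensional vector, so I want the affineness and the infimum argument to be phrased over the correct space of admissible policies. This causes no real difficulty, because a convex combination of multinomial distributions is again a multinomial distribution, so $\lambda \bm \pi_1 + (1-\lambda)\bm \pi_2$ remains a valid policy; the convex-analytic argument then transfers without any regularity assumptions on $\bm f_r$ or $\ell$. The only genuine content of the theorem is the linearity established in the first step — everything downstream is the generic concavity of a lower envelope of affine functions.
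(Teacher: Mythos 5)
Your proposal is correct and follows essentially the same route as the paper's proof: the paper likewise first establishes the linearity of $\mathcal{L}_r(\bm \pi, \bm \theta_r)$ in $\bm \pi$ for a fixed predictor by expanding the expectation over the one-hot mask, and then obtains concavity by evaluating the mixture at its own optimal $\bm \theta_r^*$ and lower-bounding each term by its respective minimum, which is exactly your ``pointwise infimum of affine functions'' argument written out concretely. No gaps; the two proofs are the same argument in different packaging.
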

The proof is presented in Appendix~\ref{subsec:concave_thm_proof}. Theorem~\ref{thm:concave} implies the cooperative rationalization objective can contain many local optima at the corners. Going back to the two-sentence example, Figure~\ref{fig:convex}\subref{subfig:convex1} plots an example \e{\mathcal{L}_r^*(\bm \pi)} against \e{\pi_1}. Since there are two sentences, \e{\pi_1=0} implies that the generator always selects \e{\bm X_2}, and \e{\pi_1=1} implies the generator always selects \e{\bm X_1}. As shown in the figure, since \e{\bm X_1} is more informative than \e{\bm X_2}, the global minimum is achieved at \e{\pi_1=1}. However, it can be observed that \e{\pi_1=0} is also a local minimum, and therefore the rationalization framework can be undesirably trapped into the rationalization scheme that always selects the worse sentence of the two.

\begin{figure}
    \centering
    \subfloat[]{\includegraphics[width=0.32\linewidth]{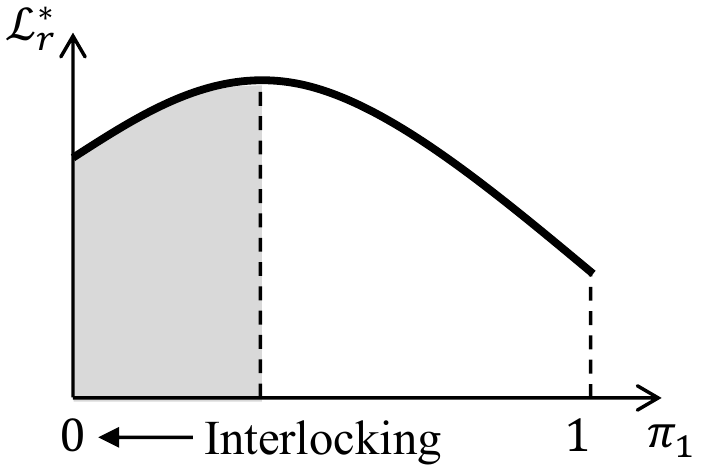}\label{subfig:convex1}}
    \subfloat[]{\includegraphics[width=0.32\linewidth]{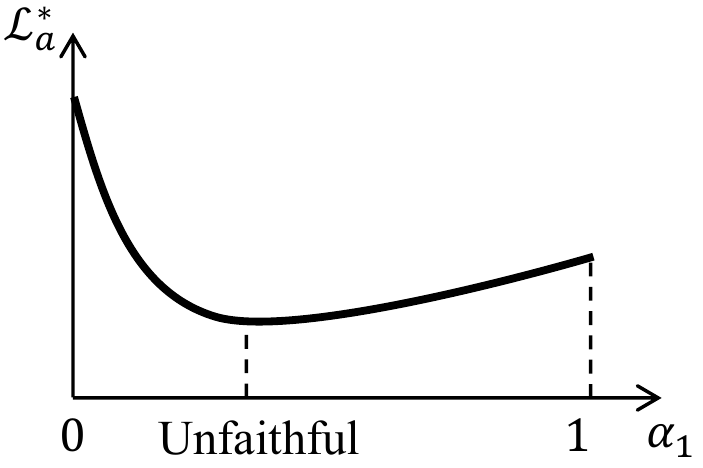}\label{subfig:convex2}}
    \subfloat[]{\includegraphics[width=0.32\linewidth]{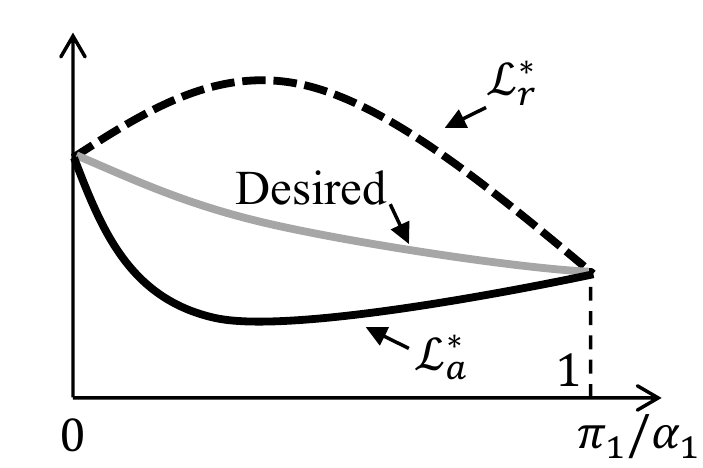}\label{subfig:convex3}}
    \caption{\small{Example loss landscapes of the two-sentence scenario. (a) An example loss landscape of rationale-based explanation (Equation~\eqref{eq:L*_def}), which is concave and induces interlocking dynamics towards a sub-optimal local minimum. (b) An example loss landscape of attention-based explanation (Equation~\eqref{eq:L*_a_def}), which is convex but with an unfaithful global minimum. (c) The two loss landscapes share common end points. Desirable landscapes should lie in between.}}
    \label{fig:convex}
\end{figure}

\subsection{Convexity of Attention-based Explanation}
\label{subsec:attention}

Knowing that the selective rationalization has an undesirable concave objective, 
we now turn to another class of explanation scheme, \emph{i.e.}, attention-based explanation, which uses soft attention, rather than binary selection, of the input as an explanation. Specifically, we would like to investigate whether its objective has a more or less desirable convexity property than that of selective rationalization.

Formally, consider an attention-based predictor, \e{\bm f_a(\bm \alpha(\bm X) \odot \bm X; \bm \theta_a)} (the subscript \e{a} stands for attention), which is almost identical to the rationalization predictor in Equation~\eqref{eq:hard_predictor_def}, except that the binary mask \e{\bm M} is replaced with a soft attention weight \e{\bm \alpha(\bm X)} where each dimension sums to one. So the optimization objective becomes
\begin{equation}
    \small
    \min_{\bm \alpha(\cdot), \bm \theta_a} \mathcal{L}_a(\bm \alpha, \bm \theta_a), \quad \mbox{where } \mathcal{L}_a(\bm \alpha, \bm \theta_a) = \mathbb{E}_{\bm X, Y \sim D_{tr}}[\ell(Y, \bm f_a(\bm \alpha(\bm X) \odot \bm X; \bm \theta_a))].
    \label{eq:att_prob}
\end{equation}
Similar to Equations~\eqref{eq:nested_obj} to \eqref{eq:L*_def}, define
\begin{equation}
    \small
    \mathcal{L}_a^*(\bm \alpha) = \mathcal{L}(\bm \alpha, \bm \theta_a^*(\bm \alpha)), \quad \mbox{where } \bm \theta_a^*(\bm \alpha) = \argmin_{\bm \theta_a} \mathcal{L}_a(\bm \alpha, \bm \theta_a).
    \label{eq:L*_a_def}
\end{equation}
The following theorem shows that \e{\mathcal{L}_a^*(\bm \alpha)} has a more desirable convexity property.

\begin{theorem}
    \e{\mathcal{L}_a^*(\bm \alpha)} is convex with respect to \e{\bm \alpha}, if
    
    \quad 1. \e{\mathcal{L}_a(\bm \alpha, \bm \theta_a)} is \e{\mu}-strongly convex with respect to \e{\bm \alpha} with \e{\ell_2} distance metric, \e{\forall \bm \theta_a}; 
    
    \quad 2. \e{\mathcal{L}_a(\bm \alpha, \bm \theta_a^*(\bm \alpha'))} has a bounded regret with the optimal loss, i.e., when \e{\bm \alpha'=\bm \alpha}, with \e{\ell_2} norm:
        \begin{equation}
        \small
          \mathcal{L}_a(\bm \alpha, \bm \theta_a^*(\bm \alpha')) - \mathcal{L}_a(\bm \alpha, \bm \theta_a^*(\bm \alpha)) \leq \frac{l}{2} \mathbb{E}\left[\lVert\bm \alpha(\bm X) - \bm \alpha'(\bm X)\rVert_2\right]^2, \quad \forall \bm \alpha(\cdot), \bm \alpha'(\cdot);
        \end{equation} 
    \quad 3. \e{\mu \geq l}.
    \label{thm:convexity}
\end{theorem}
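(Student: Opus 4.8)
The plan is to verify convexity directly from the definition. Fix two attention maps $\bm \alpha_0(\cdot)$ and $\bm \alpha_1(\cdot)$, a weight $t \in [0,1]$, and set $\bm \alpha_t = (1-t)\bm \alpha_0 + t \bm \alpha_1$; the goal is then to establish $\mathcal{L}_a^*(\bm \alpha_t) \leq (1-t)\mathcal{L}_a^*(\bm \alpha_0) + t\,\mathcal{L}_a^*(\bm \alpha_1)$. Throughout I would abbreviate the distance metric appearing in the hypotheses as $d(\bm \alpha, \bm \alpha') = \mathbb{E}[\lVert \bm \alpha(\bm X) - \bm \alpha'(\bm X)\rVert_2]$, so that condition~1 reads as $\mu$-strong convexity in $d$ and condition~2 as a regret bound measured in $d^2$.

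First I would freeze the predictor at its optimum for the midpoint, writing $\bm \theta_t^* := \bm \theta_a^*(\bm \alpha_t)$, so that $\mathcal{L}_a^*(\bm \alpha_t) = \mathcal{L}_a(\bm \alpha_t, \bm \theta_t^*)$. Applying the $\mu$-strong convexity of $\mathcal{L}_a(\cdot, \bm \theta_t^*)$ in $\bm \alpha$ (condition~1, which holds for every parameter and hence for this particular $\bm \theta_t^*$) gives
\[
\mathcal{L}_a(\bm \alpha_t, \bm \theta_t^*) \leq (1-t)\mathcal{L}_a(\bm \alpha_0, \bm \theta_t^*) + t\,\mathcal{L}_a(\bm \alpha_1, \bm \theta_t^*) - \tfrac{\mu}{2}\,t(1-t)\,d(\bm \alpha_0, \bm \alpha_1)^2.
\]
The two terms on the right still use the midpoint's predictor $\bm \theta_t^*$, which is suboptimal for $\bm \alpha_0$ and $\bm \alpha_1$. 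I would remove this mismatch using the regret bound of condition~2, applied with $(\bm \alpha, \bm \alpha') = (\bm \alpha_0, \bm \alpha_t)$ and $(\bm \alpha_1, \bm \alpha_t)$ respectively, which yields $\mathcal{L}_a(\bm \alpha_0, \bm \theta_t^*) \leq \mathcal{L}_a^*(\bm \alpha_0) + \tfrac{l}{2}d(\bm \alpha_0, \bm \alpha_t)^2$ and the analogue for $\bm \alpha_1$.

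The crux is the geometry of the metric. Because $\bm \alpha_0 - \bm \alpha_t = t(\bm \alpha_0 - \bm \alpha_1)$ and $\bm \alpha_1 - \bm \alpha_t = (1-t)(\bm \alpha_1 - \bm \alpha_0)$ pointwise, and because $d$ is positively homogeneous (the $\ell_2$ norm scales and the nonnegative factors $t, 1-t$ pull out of the expectation), I obtain $d(\bm \alpha_0, \bm \alpha_t) = t\,d(\bm \alpha_0, \bm \alpha_1)$ and $d(\bm \alpha_1, \bm \alpha_t) = (1-t)\,d(\bm \alpha_0, \bm \alpha_1)$. Substituting the regret bounds together with these identities into the strong-convexity inequality, the total regret contribution is $\tfrac{l}{2}\big[(1-t)t^2 + t(1-t)^2\big]\,d(\bm \alpha_0, \bm \alpha_1)^2 = \tfrac{l}{2}\,t(1-t)\,d(\bm \alpha_0, \bm \alpha_1)^2$, using the identity $(1-t)t^2 + t(1-t)^2 = t(1-t)$. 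Collecting everything leaves
\[
\mathcal{L}_a^*(\bm \alpha_t) \leq (1-t)\mathcal{L}_a^*(\bm \alpha_0) + t\,\mathcal{L}_a^*(\bm \alpha_1) + \tfrac{l-\mu}{2}\,t(1-t)\,d(\bm \alpha_0, \bm \alpha_1)^2,
\]
and condition~3 ($\mu \geq l$) makes the final term nonpositive, which establishes the claimed convexity.

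The step I expect to demand the most care is matching the precise form of the metric across the two hypotheses: the regret bound in condition~2 is stated with $\mathbb{E}[\lVert \cdot \rVert_2]^2$, i.e.\ the square of an expectation rather than an expectation of a square, so the strong convexity of condition~1 must be interpreted in the same $d$. This consistency is exactly what makes the homogeneity identities $d(\bm \alpha_0, \bm \alpha_t) = t\,d(\bm \alpha_0, \bm \alpha_1)$ clean, and it lets the algebraic identity $(1-t)t^2 + t(1-t)^2 = t(1-t)$ align the aggregated regret term perfectly against the strong-convexity gap, so that $\mu \geq l$ emerges as precisely the threshold at which $\mathcal{L}_a^*$ becomes convex.
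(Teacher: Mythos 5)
Your proof is correct and follows essentially the same route as the paper's: apply $\mu$-strong convexity with the predictor frozen at the midpoint's optimum, then use the bounded-regret assumption to replace that predictor by the optimal one for each endpoint, and let $\mu \geq l$ absorb the residual term. Your explicit bookkeeping of the regret contributions via the homogeneity identities $d(\bm\alpha_0,\bm\alpha_t)=t\,d(\bm\alpha_0,\bm\alpha_1)$ and $(1-t)t^2+t(1-t)^2=t(1-t)$ simply fills in the arithmetic that the paper's step $(ii)$ leaves implicit, and your remark about reading both assumptions in the same metric is a fair observation about a notational inconsistency in the paper itself.
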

The proof is presented in Appendix~\ref{subsec:convex_thm_proof}, where we also discuss the feasibility of the assumptions.
A special case where the predictor has sufficient representation power is discussed in Appendix~\ref{subsec:special_case}
Figure~\ref{fig:convex}\subref{subfig:convex2} plots an example \e{\mathcal{L}_a(\bm \alpha)} against \e{\alpha_1}, again under the same two-sentence toy scenario. Note that \e{\alpha_1=0} means \e{\bm X_2} gets all the weight; \e{\alpha_1=1} means \e{\bm X_1} gets all the weights. As can be observed, \e{\mathcal{L}_a(\bm \alpha)} is now a convex function, which makes it more desirable in terms of optimization. However, the example in Figure~\ref{fig:convex}\subref{subfig:convex2} also shows why such attention-based scheme is sometimes not faithful. Even though \e{\bm X_1} is a better sentence of the two, the global minimum \e{\mathcal{L}_a(\bm \alpha)} is achieved at the point where \e{\bm X_2} gets a larger weight than \e{\bm X_1} does.  The reason why the global minimum is usually achieved in the interior (\e{0 <\alpha_1 < 1}) rather than the corner (\e{\alpha_1 = 0} or \e{1}) is that the predictor would have access to more information if both \e{\bm X_1} and \e{\bm X_2} get non-zero attention weights. 

\subsection{Comparing Binary Selection and Soft Attention}

Figure~\ref{fig:convex}\subref{subfig:convex3} puts together the two loss landscapes, \e{\mathcal{L}^*(\bm \pi)} and \e{\mathcal{L}_a^*(\bm \alpha)}, with the rationale selection probability tied to the attention weights, \emph{i.e.}, \e{\bm \pi = \bm \alpha}. There are two important observations. First, the two loss functions take the same values at the two corners, \e{\pi_1=\alpha_1=0} and \e{\pi_1=\alpha_1=1}, because at either corner case, both binary selection and soft attention schemes would exclusively select one of the two sentences, hence yielding the same loss, if both predictors have the same architecture and parameterization. Second, the binary selection and soft attention have complementary advantages. The former has a faithful global minimum but concave; the latter is convex but the global minimum is not faithful. Therefore, both advantages can be simultaneously achieved if we can design a system with a loss landscape that lies in between the two loss functions, as shown by the gray curve.

\section{The Proposed \algname (Attention-to-Rationale) Framework}
\label{sec:method}

\begin{figure}[t!]
    \centering
    \includegraphics[width=0.95\linewidth]{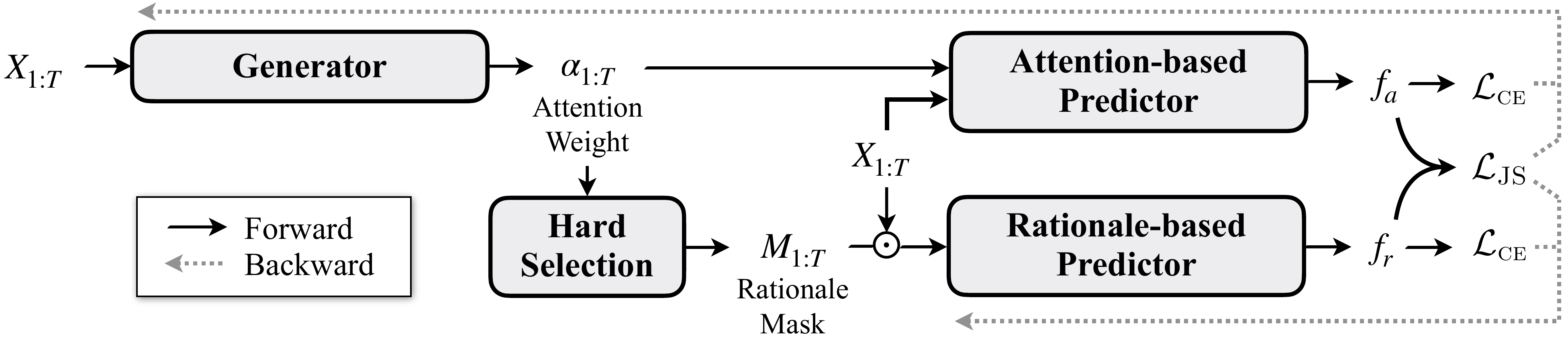}
    \caption{\small{Our proposed rationalization architecture.}}
    \label{fig:method}
\end{figure}

\subsection{The \algname Architecture}
\label{ssec:method}

Our proposed \algname aims to 
combine the merits of selective rationalization and attention-based explanations. Figure~\ref{fig:method} shows the architecture of \algnamens. \algname 
consists of three modules, a \emph{rationale generator}, a \emph{rationale-based predictor}, and an \emph{attention-based predictor}.

The \emph{rationale generator} generates a soft attention, \e{\bm \alpha(\bm X)}.
The same soft attention also serves as the probability distribution from which the rationale selection mask, \e{\bm M}, is drawn. \emph{i.e.}, \e{\bm M \sim \bm \alpha(\bm X)}.
The \emph{rationale-based predictor}, \e{\bm f_r(\cdot; \bm \theta_r)}, predicts the output \e{Y} based on the input masked by \e{\bm M}. The \emph{attention-based predictor}, \e{\bm f_a(\cdot; \bm \theta_a)}, predicts the output \e{Y} based on the representation weighted by \e{\bm \alpha(\bm X)}. \e{\bm \theta_r} and \e{\bm \theta_a} denote the parameter of the two predictors, respectively. Formally,
\begin{equation}
\small
    \bm f_r(\bm M \odot \bm X; \bm \theta_r), \quad \bm f_a(\bm X, \bm \alpha(\bm X); \bm \theta_a). \nonumber
\end{equation}
Note that, instead of having the input form of \e{\bm \alpha(\bm X)\odot\bm X} to the attention-based predictor (as in Section~\ref{subsec:attention}), we write \e{\bm X} and \e{\bm \alpha(\bm X)} as two separate inputs, to accommodate broader attention mechanisms that weight on the intermediate representations rather than directly on the input.
In the experiments, we implement this general framework following some common practices in the NLP community, with details deferred in Section~\ref{ssec:exp_details}.

It is worth emphasizing that the output of the rationale generator, \e{\bm \alpha(\bm X)}, is just one set of attention weights, but has two uses. First, it is used to directly weight the input features, which is fed to the attention-based predictor. Second, it is used to characterize the distribution of the rationale mask \e{\bm M}. The rationale mask is applied to the input feature, which is then fed to the rationale-based predictor. 

So far, our discussion has focused on the case where only one of the input features is selected as the rationale. 
\algname can generalize to the case where multiple input features are selected. In this case, the rationale mask \e{\bm M} can have multiple dimensions equal to one. In our implementation, \e{\bm M} is determined by retaining \e{q\%} largest elements of \e{\bm \alpha(\bm X)}, where \e{q} is a preset sparsity level.  

\subsection{The Training Objectives}

The three components have slightly different training objectives. The rationale-based predictor minimizes its prediction loss, while reducing the gap between the two predictors, \emph{i.e.}
\begin{equation}
    \small
    \min_{\bm \theta_r}  \mathcal{L}_r(\bm \pi, \bm \theta_r) + \lambda \mathcal{L}_{JS}(\bm \pi, \bm \theta_r, \bm \theta_a),
    \label{eq:a2r_rationale_obj}
\end{equation} 
where \e{\mathcal{L}_r(\bm \pi, \bm \theta_r)} is the prediction loss of the rationale-based predictor defined in Equation~\eqref{eq:rat_prob}. \e{\mathcal{L}_{JS}(\bm \pi, \bm \theta_r, \bm \theta_a)} is the Jensen-Shannon divergence between the two predicted distributions, defined as
\begin{equation}
\small
    \mathcal{L}_{JS}(\bm \pi, \bm \theta_r, \bm \theta_a) = \mathbb{E}_{\substack{\bm X \sim \mathcal{D}_{tr} \\ \bm M \sim \bm \alpha(\bm X)}} \left[ JS(\bm f_r(\bm M \odot \bm X; \bm \theta_r) \Vert \bm f_a(\bm X, \bm \alpha(\bm X); \bm \theta_a))\right]. \nonumber
\end{equation}
We select the JS divergence because it matches the scale and gradient behavior of the other loss terms.

Both the rationale generator and the attention-based predictor try to minimize the prediction loss of the attention-based predictor, while again reducing the gap between the two predictors, \emph{i.e.},
\begin{equation}
\small
    \min_{\bm \pi(\cdot), \bm \theta_a} \mathcal{L}_a(\bm \pi, \bm \theta_a) + \lambda \mathcal{L}_{JS}(\bm \pi, \bm \theta_r, \bm \theta_a),
    \label{eq:a2r_attention_obj}
\end{equation}
where \e{\mathcal{L}_a(\bm \pi, \bm \theta_a)} is the prediction loss of the attention-based predictor defined in Equation~\eqref{eq:att_prob}. Note that both Equation~\eqref{eq:a2r_rationale_obj} and \eqref{eq:a2r_attention_obj} can be optimized using standard gradient-descent-based techniques. The gradient of the rationale-based predictor does not prapagate back to the generator.  

\subsection{How Does \algname Work}

Essentially, \algname constructs a loss landscape that lies between those of the rationale-based predictor and the attention-based predictor. To better show this, we would like to return to the toy scenario illustrated in Figure~\ref{fig:convex}\subref{subfig:convex3}. If the \e{\lambda} in Equation~\eqref{eq:a2r_attention_obj} is zero, then the loss for the rationale generator would be exactly the lowest  curve (\emph{i.e.}, \e{\mathcal{L}_a^*}).
As \e{\lambda} increases, the attention-based loss curve would shift upward towards the rationale-based loss. As a result, the actual loss curve for the generator will resemble the gray curve in the middle, which addresses the concavity problem and thus the interlocking problem, without introducing unfaithful solutions.  
We use only the attention-based predictor to govern the generator, rather than passing the gradient of both predictors to the generator, because the gradient of \e{\mathcal{L}_a} is much more stable than that of \e{\mathcal{L}_r}, which involves the policy gradient.

\section{Experiments}

\subsection{Datasets}
\label{ssec:datasets}
Two datasets are used in our experiments.  Table~\ref{tab:dataset_stats} in Appendix~\ref{app:data_stats} shows their statistics. Both datasets contain human annotations, which facilitate automatic evaluation of the rationale quality.
To our best knowledge, neither dataset contains personally identifiable information or offensive content.

{\bf{\emph{BeerAdvocate}}: }
BeerAdvocate from \cite{mcauley2012learning} is a multi-aspect sentiment prediction dataset, which has been commonly used in the field of rationalization \cite{bao2018deriving, chang2019game, lei2016rationalizing, yu2019rethinking}.  This dataset includes sentence-level annotations, where each sentence is annotated with one or multiple aspect labels.

{\bf{\emph{MovieReview}}: }
The \emph{MovieReview} dataset is from the \emph{Eraser} benchmark \cite{deyoung2019eraser}.  MovieReview is a sentiment prediction dataset that contains phrase-level rationale annotations.


\subsection{Baselines and Implementation Details}
\label{ssec:exp_details}

We compare to the original rationalization technique \textsc{Rnp} \cite{lei2016rationalizing}, and several published models that achieve state-of-the-art results on real-world benchmarks, which include \textsc{3Player} \cite{yu2019rethinking}, \textsc{HardKuma}\footnote{\url{https://github.com/bastings/interpretable_predictions}.} \cite{bastings2019interpretable}, and  \textsc{BERT-Rnp}~\cite{deyoung2019eraser}.  \textsc{3Player} model builds upon the original \textsc{Rnp} and encourages the completeness of rationale selection.  \textsc{HardKuma} is a token-level method that optimizes the dependent selection of \textsc{Rnp} to encourage more human-interpretable extractions.  \textsc{BERT-Rnp} re-implements the original \textsc{Rnp} with more powerful BERT generator and predictor.  
\textsc{Rnp} is our main baseline to directly compare with, as \textsc{Rnp} and our \algname match in granularity of selection, optimization algorithm and model architecture. We include the other baselines to show the competitiveness of our \algname.

We follow the commonly used rationalization architectures~\cite{bastings2019interpretable,lei2016rationalizing} in our implementations:
We use bidirectional gated recurrent units (GRU) \cite{chung2014empirical} in the generators and the predictors for both our \algname and our reimplemented \textsc{Rnp}. For \algnamens, we share the parameters of both predictors' GRU while leaving the output layers' parameters separated.
Our rationale predictor \e{\bm f_r} encodes the masked input \e{\bm M \odot \bm X} into the hidden states, followed by max-pooling.
The attention-based predictor \e{\bm f_a} encodes the entire input \e{\bm X} into hidden states, which is then weighted by \e{\bm \alpha}.

All methods are initialized with 100-dimension Glove embeddings~\cite{pennington2014glove}.  The hidden state dimensions is 200 for BeerAdvocate, and 100 for MovieReview.  We use Adam  \cite{kingma2014adam} as the default optimizer with a learning rate of 0.001.  The policy gradient update uses a learning rate of 1e-4. The exploration rate is 0.2.
The aforementioned hyperparameters and the best models to report are selected according to the development set accuracy.
Every compared model is trained on a single V100 GPU.


\subsection{Synthetic Experiments}
\label{ssec:exp_synthetic}

To better evaluate the interlocking dynamics, we first conduct two synthetic experiments using the BeerAdvocate dataset, where we deliberately induce interlocking dynamics.  We compare our \algname with \textsc{Rnp}, which is closest to our analyzed framework in Section~\ref{sec:analysis} that suffers from interlocking.

\begin{table}[t!]
\small
\centering
    \begin{tabular}{ll||ccccc||ccccc}
    \parbox[t]{7mm}{\multirow{2}{*}{Aspect}}&\multirow{2}{*}{Setting}& \multicolumn{5}{c||}{\textsc{Rnp}} &\multicolumn{5}{c}{\algnamens}\\
    && Acc & P & R & F1 & $X_1$\%& Acc & P & R & F1 & $X_1$\%\\
    \midrule\midrule
    \parbox[t]{2mm}{\multirow{3}{*}{\rotatebox[origin=c]{30}{Aroma}}} & Skew10 & 82.6 & 68.5 & 63.7 & 61.5 & 14.5 & 84.5 & 78.3 & 70.6 & \bf 69.2 & 10.4 \\
    &Skew15 & 80.4 & 54.5 & 51.6 & 49.3 & 31.2 & 81.8 & 58.1 & 53.3 & \bf 51.7 & 35.7 \\
    &Skew20 &  76.8 &  10.8 & 14.1 & 11.0 & 80.5 & 80.0 & 51.7 &  47.9 & \bf 46.3 & 41.5 \\
    \midrule\midrule
    \parbox[t]{2mm}{\multirow{3}{*}{\rotatebox[origin=c]{30}{Palate}}}&Skew10 & 77.3 & 5.6 & 7.4 & 5.5 & 63.9 & 82.8 & 50.3 & 48.0 & \bf 45.5 & 27.5 \\
    &Skew15 & 77.1 & 1.2 & 2.5 & 1.3 & 83.1 &  80.9 & 30.2 & 29.9 & \bf 27.7 &  58.0  \\
    &Skew20 &  75.6 &  0.4 & 1.4 & \bf 0.6 & 100.0 &  76.7 & 0.4 & 1.6 & \bf 0.6 & 97.0  \\
    \midrule\midrule
    \parbox[t]{2mm}{\multirow{3}{*}{\rotatebox[origin=c]{30}{Aroma}}}&Biased0.7 & 84.7 & 71.0 & 65.4 & 63.4 & 12.6  & 85.5 & 77.9 &  70.4 & \bf  69.0 &  12.2\\
    &Biased0.75 & 84.4 & 58.1 & 54.5 & 52.3 & 25.3 & 85.3 & 68.4 & 61.7 & \bf  60.5 &  20.9  \\
    &Biased0.8 & 83.3 & 2.6 & 6.0 & 3.4 & 99.9 & 85.8 & 59.7 &  54.8 & \bf  53.2 &  29.8\\
    \midrule\midrule
    \parbox[t]{7mm}{\multirow{3}{*}{\rotatebox[origin=c]{30}{Palate}}}&Biased0.7 & 83.9 & 51.4 & 50.5 & 47.3 & 24.3  &  83.5 & 55.0 &  52.9 &  \bf 50.1 &  18.8 \\
    &Biased0.75 & 80.0 & 0.4 & 1.4 & 0.6 & 100.0 &  82.8 & 52.7 &  50.7 & \bf  47.9 &  22.0  \\
    &Biased0.8 & 82.0 & 0.4 & 1.4 & 0.6  & 100.0 &   83.6 & 47.9 &  46.2 & \bf  43.5 &  29.6 \\
    \end{tabular}
    \caption{\small{Results on Beer-Skew (top) and Beer-Biased (bottom). 
    P, R, and F1 indicate the token-level precision, recall, and F1 of rationale selection.
    $X_1$\% refers to the ratio of first sentence selection (lower is better). 
    The aroma and palate aspects have 0.5\% and 0.2\% of the testing examples with groundtruth rationales located in the first sentence, respectively.
    \textbf{Bold} numbers refer to the better performance between \textsc{Rnp} and \algname in each setting.
    }}
    \label{tab:sent_level_skewed}
\end{table}

{\bf Beer-Skewed: }
In the first synthetic experiment, we let the rationale predictor overfit the first sentence of each example at the initialization. In the BeerAdvocate dataset, the first sentence is usually about the appearance aspect of the beer, and thus is rarely the optimal rationale when the explanation target is the sentiment for the aroma or palate aspects.  However, by pre-training rationale predictor on the first sentence, we expect to induce an interlocking dynamics toward selecting the sub-optimal first sentence. Specifically, we pre-train the rationale predictor for $k$ epochs by only feeding the first sentence.  Once pre-trained, we then initialize the generator and train the entire rationalization pipeline.  We set $k$ to be 10, 15, and 20 for our experiments.

Table~\ref{tab:sent_level_skewed} (top) shows the result in the synthetic Beer-Skewed setting. 
The $k$ in `Skew$k$' denotes the number of pre-training epochs. The larger the $k$, the more serious the overfitting.
$X_1\%$ denotes the percentage of the test examples where the first sentence is selected as rationale. 
The higher $X_1\%$ is, the worse the algorithm suffers from interlocking.
There are two important observations. 
First, when the number of skewed training epochs increases, the model performance becomes worse, \emph{i.e.}, it becomes harder for the models to escape from interlocking.
Second, 
the \textsc{Rnp} model fails to escape in the Aroma-Skew20 setting and all the palate settings (in terms of low F1 scores), while our \algname can rescue the training process except for Palate-Skew20. For the other settings, both models can switch to better selection modes but the performance gaps between the \textsc{Rnp} and our methods are large.

We further study the failure in the Palate-Skew20 setting with another experiment where we set $\lambda$=0 to degrade our system a soft-attention system, which in theory would not suffer from interlocking. In the mean time it still generates the hard mask as rationales and trains the rationale-based predictor.
This results in a 2.2\% F1 score, with 97.3\% $X_1$ selection -- \emph{i.e.}, the soft model also fails.
This suggests that the failure of \algname may not be ascribed to its inability to cope with interlocking, but possibly to the gradient saturation of the predictor.

{\bf Beer-Biased: }
The second setup considers interlocking caused by strong spurious correlations.   We follow a similar setup in \cite{chang2020invariant} 
to append punctuation ``,'' and ``.'' at the beginning of the first sentence with the following distributions: 
\begin{equation*}
\small
p(\text{append , } | Y=1) = p(\text{append . } | Y=0) = \alpha; ~~
p(\text{append . } | Y=1) = p(\text{append , } | Y=0) = 1 - \alpha.
\label{eq:pollution}
\end{equation*}
We set $\alpha$ to 0.7, 0.75, and 0.8 for our experiments, which are all below the achievable accuracy that selecting the true rationales.
Intuitively, since sentence one now contains the appended punctuation, which is an easy-to-capture clue, we expect to induce an interlocking dynamics towards selecting the first sentence, even though the appended punctuation is not as predictive as the true rationales.

Table~\ref{tab:sent_level_skewed} (bottom) shows the result in the synthetic Beer-Biased setting. The result is similar to that in the Beer-Skewed setting.
First, the higher correlated bias makes it more difficult for the models to escape from interlocking.
Second, our model can significantly outperforms the baseline across all the settings.
Third, the \textsc{Rnp} model fails to escape in the Aroma-Biased0.8 and the Palate-Biased settings with biases ratios of 0.75 and 0.8, while our \algname can do well for all of them.

\begingroup
\setlength{\tabcolsep}{5.5pt}
\begin{table}[t!]
    \caption{\small{Full results on Beer Review. Our \algname achieves best results on all the aspects. Note that the appearance aspect does not suffer from interlocking so all approaches performs similarly.}}
    \label{tab:sent_level_results}
    \centering
    \small
    \begin{tabular}{l||cccc||cccc||cccc}
    & \multicolumn{4}{c||}{Appearance} & \multicolumn{4}{c||}{Aroma} & \multicolumn{4}{c}{Palate} \\
    & Acc & P & R & F1& Acc & P & R & F1& Acc & P & R & F1 \\
    \midrule\midrule
    HardKuma~\cite{bastings2019interpretable} & 86.0 &81.0 & 69.9 & 71.5& 85.7&74.0 &  72.4 & 68.1&  84.4 &45.4 & 73.0 & 46.7 \\
    \textsc{Rnp} & 85.7 & 83.9 & 71.2 & 72.8  & 84.2 & 73.6 & 67.9 & 65.9 & 83.8 & 55.5 & 54.3 &51.0\\ 
    3\textsc{Player} & 85.8 & 78.3 & 66.9 & 68.2  & 84.6 & 74.8 & 68.5 & 66.7 &  83.9 & 54.9 & 53.5 & 50.3 \\ 
    \midrule\midrule
    Our \algnamens & 86.3 &  84.7  & 71.2  & \bf 72.9 & 84.9 & 79.3  & 71.3  & \bf 70.0 & 84.0  & 64.2  & 60.9  & \bf 58.0 \\
    \quad(std) & $_{\pm 0.2}$  & $_{\pm 1.2}$  & $_{\pm 0.7}$  & $_{\pm 0.8}$   & $_{\pm 0.1}$  & $_{\pm 0.5}$  & $_{\pm 0.3}$  & $_{\pm 0.4}$  & $_{\pm 0.2}$  & $_{\pm 0.7}$  & $_{\pm 0.4}$  & $_{\pm 0.5}$   \\
    \end{tabular}
\end{table}
\endgroup

\subsection{Results on Real-World Settings}
\label{ssec:exp_real_world}

{\bf BeerAdvocate: }
Table~\ref{tab:sent_level_results} gives results on the standard beer review task. Our \algname achieves new state-of-the-art on all the three aspects, in terms of the rationale F1 scores.  All three baselines generate continuous text spans as rationales, thus giving a similar range of performance.  Among them, the state-of-the-art method, HardKuma, is not restricted to selecting a single sentence, but would usually select only 1$\sim$2 long spans as rationales, due to 
the dependent selection model and the strong continuity constraint. Therefore, the method has more freedom in rationale selection compared to the sentence selection in others, and gives high predictive accuracy and good rationalization quality.

\floatsetup[table]{capposition=bottom}
\begin{table*}[t!]
	\small
	\begin{tabular}{p{\linewidth}}
        \emph{BeerAdvocate - Palate Aspect} \\
		\arrayrulecolor{grey}  
		\midrule
        \hlb{pours}{0} \hlb{a}{0} \hlb{dark}{0} \hlb{brown}{0} \hlb{,}{0} \hlb{almost}{0} \hlb{black}{0} \hlb{color}{0} \hlb{.}{0} \hlb{there}{0} \hlb{is}{0} \hlb{minimal}{0} \hlb{head}{0} \hlb{that}{0} \hlb{goes}{0} \hlb{away}{0} \hlb{almost}{0} \hlb{immediately}{0} \hlb{with}{0} \hlb{only}{0} \hlb{a}{0} \hlb{little}{0} \hlb{lacing}{0} \hlb{.}{0} \hlb{smell}{0} \hlb{is}{0} \hlb{a}{0} \hlb{little}{0} \hlb{subdued}{0} \hlb{.}{0} \hlb{dark}{0} \hlb{coffee}{0} \hlb{malts}{0} \hlb{are}{0} \hlb{the}{0} \hlb{main}{0} \hlb{smell}{0} \hlb{with}{0} \hlb{a}{0} \hlb{slight}{0} \hlb{bit}{0} \hlb{of}{0} \hlb{hops}{0} \hlb{also}{0} \hlb{.}{0} \hlb{taste}{0} \hlb{is}{0} \hlb{mostly}{0} \hlb{of}{0} \hlb{coffee}{0} \hlb{with}{0} \hlb{a}{0} \hlb{little}{0} \hlb{dark}{0} \hlb{chocolate}{0} \hlb{.}{0} \hlb{it}{0} \hlb{starts}{0} \hlb{sweets}{0} \hlb{,}{0} \hlb{but}{0} \hlb{ends}{0} \hlb{with}{0} \hlb{the}{0} \hlb{dry}{0} \hlb{espresso}{0} \hlb{taste}{0} \hlb{.}{0} \hlb{\textbf{\ul{mouthfeel}}}{20} \hlb{\textbf{\ul{is}}}{20} \hlb{\textbf{\ul{thick}}}{20} \hlb{\textbf{\ul{and}}}{20} \hlb{\textbf{\ul{chewy}}}{20} \hlb{\textbf{\ul{like}}}{20} \hlb{\textbf{\ul{a}}}{20} \hlb{\textbf{\ul{stout}}}{20} \hlb{\textbf{\ul{should}}}{20} \hlb{\textbf{\ul{be}}}{20} \hlb{\textbf{\ul{,}}}{20} \hlb{\textbf{\ul{but}}}{20} \hlb{\textbf{\ul{i}}}{20} \hlb{\textbf{\ul{prefer}}}{20} \hlb{\textbf{\ul{a}}}{20} \hlb{\textbf{\ul{smoother}}}{20} \hlb{\textbf{\ul{feel}}}{20} \hlb{\textbf{\ul{.}}}{20} \hlr{\emph{drinkability}}{20} \hlr{\emph{is}}{20} \hlr{\emph{nice}}{20} \hlr{\emph{.}}{20} \hlb{a}{0} \hlb{very}{0} \hlb{good}{0} \hlb{representation}{0} \hlb{for}{0} \hlb{its}{0} \hlb{style}{0} \hlb{.}{0}\\
	\end{tabular}
	\vspace*{-0.05in}
    \captionof{figure}{\small{Examples of generated rationales on the palate aspect.  Human annotated words are \ul{underlined}. \algname and \textsc{Rnp} rationales are highlighted in \hlb{\textbf{blue}}{20} and \hlr{\emph{red}}{20} colors, respectively.}}
    \label{fig:exp_highlight_short}
\end{table*}
\floatsetup[table]{capposition=top}

\algname achieves a consistent performance advantage over all the baselines on all three aspects. In addition, we have observed evidence suggesting that the performance advantage is likely due to \algnamens's superior handling of the interlocking dynamics. More specifically, most beer reviews contain highly correlated aspects, which can induce interlocking dynamics towards selecting the review of a spuriously correlated aspect, analogous to the appended punctuations in the Beer-Biased synthetic setting. For example, when trained on the aroma or the palate aspect, \textsc{Rnp} has the first 7 epochs selecting the ``overall'' reviews for more than 20\% of the samples.  On the palate aspect, \textsc{Rnp} also selects the aroma reviews for more than 20\% samples in the first 6 epochs. Both of these observations indicate that \textsc{Rnp}  is trapped in a interlocking convergence path. On the appearance aspect, we do not observe severe interlocking trajectories in \textsc{Rnp}; therefore for this aspect, we do not expect a huge improvement in our proposed algorithm.
The aforementioned training dynamics explain why our approach has a larger performance advantage on aroma and palate aspects (4.5\% and 7.4\% in F1 respectively) than on appearance. Figure~\ref{fig:exp_highlight_short} gives an example where the \textsc{Rnp} makes a mistake of selecting the ``overall'' review.  More examples can be found in Appendix~\ref{app:examples}.

{\bf MovieReview: }
Table~\ref{tab:token_level_movie} gives results on the movie review task.  Since the human rationales are multiple phrase pieces, we make both \textsc{Rnp} and \algname perform token-level selections to better fits to this task. We follow the standard setting \cite{bao2018deriving,lei2016rationalizing} to use the sparsity and continuity constraints to regularize the selected rationales for all methods.  For fair comparisons, we use a strong constraint weight of 1.0 to punish all algorithms that highlight more than 20\% of the inputs, or have more than 10 isolated spans.  These numbers are selected according to the statistics of the rationale annotations.

\begin{wraptable}{r}{0.45\textwidth}
\caption{\small{Results on movie review.}}
\small
\centering
\begin{tabular}{l||ccc}
 & P & R & F1    \\
    \midrule\midrule
    \textsc{Rnp} impl by \cite{lehman2019inferring} &--&--&13.9\\
    BERT-\textsc{Rnp} \cite{deyoung2019eraser} &--&--&32.2\\
    \textsc{HardKuma}~\cite{bastings2019interpretable} & 31.1 & 28.3 & 27.0\\
    \textsc{Rnp}&35.6 & 21.1 & 24.1\\
    \textsc{3Player}  & 38.2 & 26.0 & 28.0 \\ 
    \midrule\midrule
    Our \algname & \bf 48.7 & \bf 31.9 & \bf 34.9$_{\pm 0.5}$ \\ 
    \end{tabular}
    \label{tab:token_level_movie}
    \vspace{-0.1in}
\end{wraptable}

Different from BeerAdvocate,  the annotations of MovieReview are at the phrase-level, which are formed as multiple short spans.   In addition, these annotated rationales often tend to be ``over-complete'', \emph{i.e.,} they contain multiple phrases, all of which are individually highly predictive of the output.  Because of this, the advantage of \textsc{HardKuma} becomes less obvious compared to other baselines.   Yet it still outperforms two different implementations of \textsc{Rnp} (\emph{i.e.,} the published result in ~\cite{lehman2019inferring}, and our own implementation).  Our \algname method consistently beats all the baselines including the strong BERT-based approach.

{\bf Sensitivity of $\mathbf{\lambda}$: }
In the previous experiments, we set $\lambda$=1.0. 
This is a natural choice because the two loss terms are of the same scale.
To understand the sensitivity of the $\lambda$ selection, we add the analysis as follows: we re-run the experiments following the setting in Table~\ref{tab:sent_level_results}, with the value of $\lambda$ varying from 1e$^3$ to 10.
Figure~\ref{fig:lambda_analysis} summarizes the results.
As can be seen, A2R performs reasonably well within a wide range of $\lambda \sim [0.1, 2.0]$, within which the two loss terms are of comparable scales. 

\begin{wrapfigure}{r}{0.45\textwidth}
\vspace{-0.1in}
\small
\centering
\includegraphics[width=0.95\linewidth]{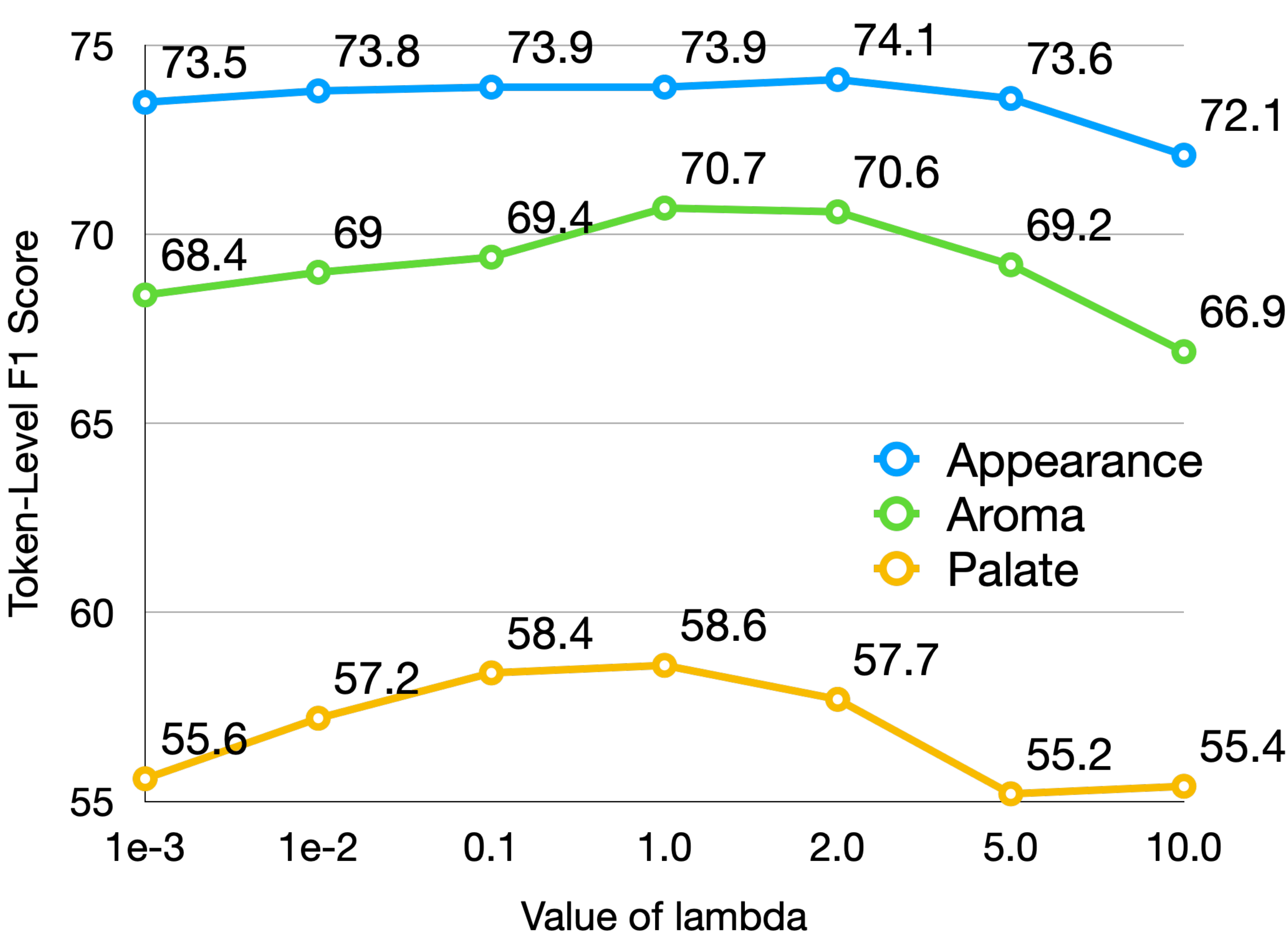}
\vspace{-0.1in}
\caption{\small{Analysis of the sensitivity of $\lambda$.}}
    \label{fig:lambda_analysis}
\end{wrapfigure}

Finally, we would like to discuss the possible future direction of annealing $\lambda$ instead of using a fixed value.
Intuitively, since the soft model does not suffer from interlocking, it may help if at the beginning of training we give the soft branch more freedom to arrive at a position without interlocking, then control the consistency to guarantee faithfulness. This corresponds to first set a small $\lambda$ and then gradually increase it.
However, our preliminary study shows that a simple implementation does not work. Specifically, we start with $\lambda = 0$ and then gradually increase $\lambda$ to 1.0 by the 10-$th$ epoch. This gives slightly worse results in almost all settings, except for the Palate-Biased0.8 case, where a slight increase is observed.

\section{Conclusion and Societal Impacts}

In this paper, we re-investigate the training difficulty in selective rationalization frameworks, and identify the interlocking dynamics as an important training obstacle.   It essentially results from the undesirable concavity of the training objective.  We provide both theoretical analysis and empirical results to verify the existence of the interlocking dynamics.  Furthermore, we propose to alleviate the interlocking problem with a new \algname method, which can resolve the problem by combining the complementary merits of selective rationalization and attention-based explanations. \algname has shown consistent performance advantages over other baselines on both synthetic and real-world experiments. \algname helps to promote trustworthy and interpretable AI, which is a major concern in society.  We do not identify significant negative impacts on society resulting from this work.

Our proposed \algname has advantages beyond alleviating interlocking.
Recent work~\cite{jacovi2021aligning,zheng2021irrationality} pointed out the lack of inherent interpretability in rationalization models, because the black-box generators are not guaranteed to produce causally corrected rationales.
Our \algname framework can alleviate this problem as the soft training path and the attention-based rationale generation improves the interpretability, which suggests a potential towards fully interpretable rationalization models in the future.


\newpage
{\small
\bibliographystyle{plain}
\bibliography{neurips_2021}}



\appendix
\section{Theorem Proofs}
\label{app:proof}

In this section, we present the proofs to the theorems introduced in the main paper.

\subsection{Proof to Theorem~\ref{thm:concave}}
\label{subsec:concave_thm_proof}

Theorem~\ref{thm:concave} describes the concavity of objective of the selective rationalization (Equation~\eqref{eq:L*_def}). The proof is presented as follows.

\begin{proof}
\e{\forall \bm\pi^{(1)} \neq \bm\pi^{(2)}}, \e{\beta \in [0, 1]}, our goal is to show that
\begin{equation}
    \small
    \mathcal{L}_r^*(\beta \bm\pi^{(1)} + (1-\beta)\bm\pi^{(2)}) \geq \beta \mathcal{L}_r^*(\bm\pi^{(1)}) + (1-\beta) \mathcal{L}_r^*(\bm\pi^{(2)}).
\end{equation}
This follows from the derivations below:
\begin{equation}
\small
    \begin{aligned}
    & \mathcal{L}_r^*(\beta \bm\pi^{(1)} + (1-\beta)\bm\pi^{(2)}) \\
    = & \mathcal{L}_r\left(\beta \bm\pi^{(1)} + (1-\beta)\bm\pi^{(2)}, \bm \theta_r^*(\beta \bm\pi^{(1)} + (1-\beta)\bm\pi^{(2)})\right) \\
    \overset{(i)}{=}& \beta \mathcal{L}_r\left(\bm\pi^{(1)}, \bm \theta_r^*(\beta \bm\pi^{(1)} + (1-\beta)\bm\pi^{(2)})\right) + (1-\beta) \mathcal{L}_r\left(\bm\pi^{(2)}, \bm \theta_r^*(\beta \bm\pi^{(1)} + (1-\beta)\bm\pi^{(2)})\right) \\
    \overset{(ii)}{\geq}& \beta \mathcal{L}_r\left(\bm\pi^{(1)}, \bm \theta_r^*(\bm\pi^{(1)})\right) + (1-\beta) \mathcal{L}_r\left(\bm\pi^{(2)}, \bm \theta_r^*(\bm\pi^{(2)})\right) \\
    =& \beta \mathcal{L}_r^*(\bm\pi^{(1)}) + (1-\beta) \mathcal{L}_r^*(\bm\pi^{(2)}),
    \end{aligned}
\end{equation}
where \e{(ii)} results from the definition of \e{\bm \theta_r^*} in Equation~\eqref{eq:f*_def}; \e{(i)} is due to the linearity of \e{\mathcal{L}_r} with respect to \e{\beta}. More specifically
\begin{equation}
    \small
    \begin{aligned}
    &\mathcal{L}_r\left(\beta \bm\pi^{(1)} + (1-\beta)\bm\pi^{(2)}, \bm \theta_r^*(\beta \bm\pi^{(1)} + (1-\beta)\bm\pi^{(2)})\right) \\
    =& \mathbb{E}_{\bm X, Y}\left[\mathbb{E}_{\bm M \sim \beta \bm \pi^{(1)}(\bm X) + (1-\beta) \bm \pi^{(2)}(\bm X)} [\ell(Y, \bm f_r(\bm M \odot \bm X; \bm \theta_r^*(\beta \bm\pi^{(1)} + (1-\beta)\bm\pi^{(2)}))]\right] \\
    =& \mathbb{E}_{\bm X, Y} \left[\sum_{i=1}^T  \left(\beta \pi_i^{(1)}(\bm X) + (1-\beta) \pi_i^{(2)}(\bm X)\right) \ell(Y, \bm f_r(\bm e_i \odot \bm X; \bm \theta_r^*(\beta \bm\pi^{(1)} + (1-\beta)\bm\pi^{(2)}))) \right] \\
    =& \mathbb{E}_{\bm X, Y} \Big[ \beta \mathbb{E}_{\bm M \sim \bm \pi^{(1)}(\bm X)}\big[\ell(Y, \bm f_r(\bm M \odot \bm X; \bm \theta_r^*(\beta \bm\pi^{(1)} + (1-\beta)\bm\pi^{(2)})))\big] 
    \\& \quad\quad\quad + (1-\beta)\mathbb{E}_{\bm M \sim \bm \pi^{(2)}(\bm X)}\big[\ell(Y, \bm f_r(\bm M \odot \bm X; \bm \theta_r^*(\beta \bm\pi^{(1)} + (1-\beta)\bm\pi^{(2)})))\big]\Big] \\
    =& \beta \mathcal{L}_r\left(\bm\pi^{(1)}, \bm \theta_r^*(\beta \bm\pi^{(1)} + (1-\beta)\bm\pi^{(2)})\right) + (1-\beta) \mathcal{L}_r\left(\bm\pi^{(2)}, \bm \theta_r^*(\beta \bm\pi^{(1)} + (1-\beta)\bm\pi^{(2)})\right).
    \end{aligned}
\end{equation}
\end{proof}
As can be seen, the fundamental cause of the concavity is the overfitting nature of the rationale predictor. More specifically, having one predictor trained on multiple rationale selections is worse than having multiple predictors, each specializing in a single corner case.

\subsection{Proof to Theorem~\ref{thm:convexity}}
\label{subsec:convex_thm_proof}

Theorem~\ref{thm:convexity} describes the convexity of the objective of the attention-based explanation (Equation~\eqref{eq:L*_a_def}). Before we present the proof, we would first like to discuss the feasibility of the assumptions in the theorem in practice. Regarding Assumption~1, note that \e{\mathcal{L}_a(\bm \alpha, \bm \theta_a)} is essentially a concatenation of the predictor's decision function \e{\bm f_a(\cdot)} and the loss function \e{\ell(\cdot)}. Considering many common loss functions, including the cross-entropy loss, are strongly convex, Assumption~1 will hold even for some non-convex \e{\bm f_a}, as long as the strong convexity of the loss function dominates the non-convexity of \e{\bm f_a}. 
Regarding Assumption~2, note that for the extreme case where \e{l = 0}, it holds for a constant predictor. Therefore, for less extreme cases where \e{l > 0}, Assumption~2 holds for a broader class of predictors \e{\bm f_a}, as long as its representation power is properly constrained.

The proof to Theorem~\ref{thm:convexity} is presented as follows.
\begin{proof}
\e{\forall \bm \alpha^{(1)} \neq \bm \alpha^{(2)}}, \e{\beta \in [0, 1]}, our goal is to show that
\begin{equation}
    \small
    \mathcal{L}_a^*(\beta \bm \alpha^{(1)} + (1-\beta) \bm \alpha^{(2)}) \leq \beta \mathcal{L}_a^* (\bm \alpha^{(1)}) + (1-\beta) \mathcal{L}_a (\bm \alpha^{(2)}).
\end{equation}
This follows from the following derivations.
\begin{equation}
    \small
    \begin{aligned}
    & \mathcal{L}_a^*(\beta \bm \alpha^{(1)} + (1-\beta) \bm \alpha^{(2)}) \\
    =& \mathcal{L}_a \left(\beta \bm \alpha^{(1)} + (1-\beta) \bm \alpha^{(2)}, \bm \theta_a^*(\beta \bm \alpha^{(1)} + (1-\beta) \bm \alpha^{(2)})\right) \\
    \overset{(i)}{\leq} & \beta \mathcal{L}_a \left(\bm \alpha^{(1)}, \bm \theta_a^*(\beta \bm \alpha^{(1)} + (1-\beta) \bm \alpha^{(2)})\right) + (1-\beta) \mathcal{L}_a \left(\bm \alpha^{(2)}, \bm \theta_a^*(\beta \bm \alpha^{(1)} + (1-\beta) \bm \alpha^{(2)})\right) \\
    & - \frac{\mu\beta(1-\beta)}{2} \mathbb{E}\left[ \lVert \bm \alpha^{(1)}(\bm X) - \bm \alpha^{(2)}(\bm X) \rVert_2^2\right] \\
    \overset{(ii)}{\leq} & \beta \mathcal{L}_a \left(\bm \alpha^{(1)}, \bm \theta_a^*(\bm \alpha^{(1)})\right) + (1-\beta) \mathcal{L}_a \left(\bm \alpha^{(2)}, \bm \theta_a^*(\bm \alpha^{(2)})\right) \\
    & + \frac{1}{2} \left(l - \mu \right) \beta(1-\beta) \mathbb{E}\left[ \lVert \bm \alpha^{(1)}(\bm X) - \bm \alpha^{(2)}(\bm X) \rVert_2^2\right] \\
    \leq& \beta \mathcal{L}_a^* (\bm \alpha^{(1)}) + (1-\beta) \mathcal{L}_a (\bm \alpha^{(2)}),
    \end{aligned}
\end{equation}
where \e{(i)} results from the equivalent definition of \e{\mu}-strong convexity; \e{(ii)} results from the bounded regret assumption.

\end{proof}
The key difference between the rationale-based objective and the attention-based objective is that the former averages the different rationale selections at the loss level, \emph{i.e.} after passing the predictor's decision function and the loss function, whereas the latter averages at the input level, \emph{i.e.} before passing the predictor's decision function and the loss function. As a result, the attention-based objective can use the convexity of the loss function to counter the concavity induced by the predictor's overfitting nature. That is the reason why attention-based objective has a better convexity property than rationale-based objective does.


\subsection{Convexity of Attention-based Explanation: A Special Case}
\label{subsec:special_case}

Although the assumptions in Theorem~\ref{thm:convexity} encompasses a wide range of possibilities, some assumptions may not be verifiable in practice. Therefore, in this subsection, we consider a special case that is widely encountered in real-world scenarios, especially in NLP applications, and show that the attention-based explanation loss landscape is indeed convex with respect to \e{\bm \alpha} in this case.

Consider a classification task where the loss function is the cross entropy loss. We now assume that the predictor has sufficient representation power such that the global minimum of the loss function is achieved. This approximately holds for many applications with over-parameterized neural predictors. It is easy to show that in this case
\begin{equation}
\small
    \mathcal{L}_a^*(\bm \alpha) = H(Y | \bm \alpha(\bm X) \odot \bm X).
    \label{eq:global}
\end{equation}
Recall that \e{\bm X} consists of a sequence of words/sentences, \e{\bm X_{1:T}}. Denote the support of \e{\bm X_t} as \e{\mathcal{X}}. Now we assume that \e{\mathcal{X}} satisfies the following condition
\begin{equation}
\small
    \forall \bm x^{(1)} \neq \bm x^{(2)}, \forall \mbox{ positive scalar } c_1 \geq 0, c_2 \geq 0, \quad c_1 \bm x^{(1)} = c_2 \bm x^{(2)} \Rightarrow c_1 = c_2 = 0.
    \label{eq:non_colinear}
\end{equation}
In other words, no two word/sentence representations in the vocabulary point to the same direction. This generally holds in NLP applications, where the word embeddings of any two words typically point to different directions.

In this case, we have the following theorem:
\begin{theorem}
If Equations~\eqref{eq:global} and \eqref{eq:non_colinear} hold, that \e{\mathcal{L}_a^*(\bm \alpha)} is convex with respect to \e{\bm \alpha}.
\end{theorem}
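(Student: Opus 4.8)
The plan is to combine the closed form from \eqref{eq:global} with the injectivity furnished by \eqref{eq:non_colinear}, reducing the statement to a monotonicity (data-processing) property of conditional entropy. By \eqref{eq:global}, $\mathcal{L}_a^*(\bm\alpha) = H(Y \mid \bm\alpha(\bm X)\odot\bm X)$, so it suffices to fix two attention maps $\bm\alpha^{(1)}\neq\bm\alpha^{(2)}$ and $\beta\in[0,1]$, set $\bm\alpha_\beta = \beta\bm\alpha^{(1)} + (1-\beta)\bm\alpha^{(2)}$, write $\bm Z_i = \bm\alpha^{(i)}(\bm X)\odot\bm X$ and $\bm Z_\beta = \bm\alpha_\beta(\bm X)\odot\bm X$, and prove $H(Y\mid\bm Z_\beta)\le \beta H(Y\mid\bm Z_1)+(1-\beta)H(Y\mid\bm Z_2)$.

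First I would record what each coordinate of $\bm Z$ reveals. For a coordinate $t$ with positive weight the observation is the block $\alpha_t(\bm X)\bm X_t$; by \eqref{eq:non_colinear} a scaled embedding fixes its direction uniquely among the vocabulary, so $\bm X_t$ (and then the scalar $\alpha_t$) is recoverable from that single nonzero block, whereas a zero block only signals $\alpha_t=0$. Hence the information $\bm Z$ carries about $Y$ is governed by the support $S(\bm X)=\{t:\alpha_t(\bm X)>0\}$ together with the revealed words $\{\bm X_t\}_{t\in S(\bm X)}$. This is the step where \eqref{eq:non_colinear} does the real work, decoupling the attention magnitude from the word identity.

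Next I would use the fact that, because all weights are nonnegative and $\beta\in(0,1)$, a coordinate vanishes in $\bm\alpha_\beta(\bm X)$ exactly when it vanishes in both $\bm\alpha^{(1)}(\bm X)$ and $\bm\alpha^{(2)}(\bm X)$, so $S_\beta(\bm X)=S^{(1)}(\bm X)\cup S^{(2)}(\bm X)$ pointwise. Thus $\bm Z_\beta$ exposes every word exposed by $\bm Z_1$ and every word exposed by $\bm Z_2$. Formally I would argue $\sigma(\bm Z_1)\subseteq\sigma(\bm Z_\beta)$ and $\sigma(\bm Z_2)\subseteq\sigma(\bm Z_\beta)$, so that ``conditioning on a finer variable reduces entropy'' gives $H(Y\mid\bm Z_\beta)\le H(Y\mid\bm Z_1)$ and $H(Y\mid\bm Z_\beta)\le H(Y\mid\bm Z_2)$; the $\beta$/$(1-\beta)$ weighted average of these two bounds yields convexity (indeed the stronger $H(Y\mid\bm Z_\beta)\le \min\{H(Y\mid\bm Z_1),H(Y\mid\bm Z_2)\}$).

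The hard part will be the refinement step $\sigma(\bm Z_i)\subseteq\sigma(\bm Z_\beta)$. The union-of-supports fact shows $\bm Z_\beta$ exposes a superset of the words, but reconstructing $\bm Z_i$ itself from $\bm Z_\beta$ also requires recovering the magnitudes $\alpha^{(i)}_t(\bm X)$, and in full generality $\bm\alpha^{(i)}$ depends on the entire input, including coordinates outside $S_\beta(\bm X)$ that $\bm Z_\beta$ does not expose. I would resolve this either by working in the regime where the attention magnitudes carry no information about $Y$ beyond the revealed words (so only $S(\bm X)$ and $\{\bm X_t\}_{t\in S(\bm X)}$ matter and the refinement holds verbatim), or, matching the constant-attention reading of the toy example, by taking $\bm\alpha$ input-independent, in which case the off-support coordinates are common to all inputs and the coarsening $\bm Z_\beta(\bm x)=\bm Z_\beta(\bm x')\Rightarrow \bm Z_i(\bm x)=\bm Z_i(\bm x')$ follows directly from \eqref{eq:non_colinear}. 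Pinning down the minimal hypothesis under which this data-processing inequality is valid is the crux of a fully rigorous argument.
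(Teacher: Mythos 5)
Your route is the paper's route: use \eqref{eq:global} to turn $\mathcal{L}_a^*$ into a conditional entropy, show that $\bm\alpha^{(1)}(\bm X)\odot\bm X$ is a deterministic function (a coarsening) of $\bm\alpha'(\bm X)\odot\bm X$ with $\bm\alpha'=\beta\bm\alpha^{(1)}+(1-\beta)\bm\alpha^{(2)}$, and finish with the data-processing inequality, which gives the stronger bound $\mathcal{L}_a^*(\bm\alpha')\le\min\{\mathcal{L}_a^*(\bm\alpha^{(1)}),\mathcal{L}_a^*(\bm\alpha^{(2)})\}$. You also correctly isolate the two working ingredients: \eqref{eq:non_colinear} pins down the word from any nonzero block, and nonnegativity plus $\beta\in(0,1)$ makes the support of $\bm\alpha'$ the union of the two supports. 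The gap is that you leave the refinement step $\sigma(\bm Z_1)\subseteq\sigma(\bm Z_\beta)$ unproved and propose to buy it with extra hypotheses (input-independent $\bm\alpha$, or magnitudes carrying no information). The paper closes this step without any such hypothesis, by a per-coordinate contradiction that never tries to recover the magnitudes: suppose $\bm\alpha'(\bm x)\odot\bm x=\bm\alpha'(\bm x')\odot\bm x'$ but $\alpha^{(1)}_t(\bm x)\bm x_t\neq\alpha^{(1)}_t(\bm x')\bm x'_t$ at some $t$; from $\alpha'_t(\bm x)\bm x_t=\alpha'_t(\bm x')\bm x'_t$ and \eqref{eq:non_colinear} it concludes $\alpha'_t(\bm x)=\alpha'_t(\bm x')=0$, hence all four weights $\alpha^{(1)}_t(\bm x),\alpha^{(1)}_t(\bm x'),\alpha^{(2)}_t(\bm x),\alpha^{(2)}_t(\bm x')$ vanish and both $t$-th blocks of $\bm Z_1$ are zero, a contradiction. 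Only the zero/nonzero dichotomy is needed, so the theorem is proved as stated rather than under a restricted attention class.

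That said, your worry is pointed at a real soft spot rather than at nothing. The paper's invocation of \eqref{eq:non_colinear} at coordinate $t$ is licensed only when $\bm x_t$ and $\bm x'_t$ are \emph{distinct} vocabulary items; when $\bm x_t=\bm x'_t$, equality of the $\bm\alpha'$-blocks yields only $\alpha'_t(\bm x)=\alpha'_t(\bm x')$, which does not force $\alpha^{(1)}_t(\bm x)=\alpha^{(1)}_t(\bm x')$ --- precisely the ``swap'' scenario you describe, where $\bm\alpha^{(1)}$ and $\bm\alpha^{(2)}$ exchange mass on exposed coordinates as a function of coordinates that $\bm Z_\beta$ does not reveal. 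So if you want a version you can fully defend, either reproduce the paper's contradiction argument and state explicitly the case split it relies on, or keep one of your auxiliary assumptions; but you should present the paper's argument as the intended proof and your additional hypothesis as a clarification of when its coordinate-wise use of \eqref{eq:non_colinear} applies, not as a necessary weakening of the theorem from the outset.
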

\begin{proof}
\e{\forall \bm \alpha^{(1)} \neq \bm \alpha^{(2)}}, \e{\beta \in (0, 1)}, our goal is to show that
\begin{equation}
    \small
    \mathcal{L}_a^*(\beta \bm \alpha^{(1)} + (1-\beta) \bm \alpha^{(2)}) \leq \beta \mathcal{L}_a^* (\bm \alpha^{(1)}) + (1-\beta) \mathcal{L}_a (\bm \alpha^{(2)}).
    \label{eq:thm3_goal}
\end{equation}
Define
\begin{equation}
    \small
    \bm \alpha' = \beta \bm \alpha^{(1)} + (1-\beta) \bm \alpha^{(2)}.
    \label{eq:alpha_prime_def}
\end{equation}
First, we would like to show that \e{\bm \alpha^{(1)}(\bm X) \odot \bm X} is a deterministic function of \e{\bm \alpha'(\bm X) \odot \bm X}, which means that any instances \e{\bm x^{(1)}} and \e{\bm x^{(2)}} that make \e{\bm \alpha'(\bm x^{(1)}) \odot \bm x^{(1)} = \bm \alpha'(\bm x^{(2)}) \odot \bm x^{(2)}} would also make \e{\bm \alpha^{(1)}(\bm x^{(1)}) \odot \bm x^{(1)} = \bm \alpha^{(1)}(\bm x^{(2)}) \odot \bm x^{(2)}}. We will show this by contradiction. Formally, assume \e{\exists \bm x^{(1)} \neq \bm x^{(2)}}, such that
\begin{equation}
\small
      \bm \alpha'(\bm x^{(1)}) \odot \bm x^{(1)} = \bm \alpha'(\bm x^{(2)}) \odot \bm x^{(2)},
      \label{eq:equal}
\end{equation}
but
\begin{equation}
    \small
    \bm \alpha^{(1)}(\bm x^{(1)}) \odot \bm x^{(1)} \neq \bm \alpha^{(1)}(\bm x^{(2)}) \odot \bm x^{(2)}.
    \label{eq:unequal}
\end{equation}
Then there must \e{\exists t}, such that
\begin{equation}
    \small
    \bm \alpha^{(1)}_t(\bm x^{(1)}) \bm x^{(1)}_t \neq \bm \alpha^{(1)}_t(\bm x^{(2)}) \bm x^{(2)}_t.
\end{equation}
According to Equation~\eqref{eq:equal},
\begin{equation}
\small
    \bm \alpha'_t(\bm x^{(1)}) \bm x^{(1)}_t = \bm \alpha'_t(\bm x^{(2)}) \bm x^{(2)}_t.
    \label{eq:equal_single}
\end{equation}
According to Equation~\eqref{eq:non_colinear}, Equation~\eqref{eq:equal_single} implies
\begin{equation}
    \small
    \bm \alpha'_t(\bm x^{(1)}) = \bm \alpha'_t(\bm x^{(2)}) = 0.
\end{equation}
According to Equation~\eqref{eq:alpha_prime_def}, and noticing \e{\beta}, \e{1-\beta}, \e{\bm \alpha^{(1)}_t(\bm x^{(1)})},  \e{\bm \alpha^{(1)}_t(\bm x^{(2)})},  \e{\bm \alpha^{(2)}_t(\bm x^{(1)})} \e{\bm \alpha^{(2)}_t(\bm x^{(2)})} are all non-negative, we then have
\begin{equation}
    \small
    \bm \alpha^{(1)}_t(\bm x^{(1)}) = \bm \alpha^{(1)}_t(\bm x^{(2)}) = \bm \alpha^{(2)}_t(\bm x^{(1)}) = \bm \alpha^{(2)}_t(\bm x^{(2)}) = 0,
\end{equation}
which implies
\begin{equation}
    \small
    \bm \alpha^{(1)}_t(\bm x^{(1)}) \bm x^{(1)}_t = \bm \alpha^{(1)}_t(\bm x^{(2)}) \bm x^{(2)}_t = 0.
\end{equation}
This contradicts with Equation~\eqref{eq:unequal}.

So far we have established that \e{\bm \alpha^{(1)}(\bm X) \odot \bm X} is a deterministic function of \e{\bm \alpha'(\bm X) \odot \bm X}. According to the information processing inequality,
\begin{equation}
\small
    H(Y | \bm \alpha'(\bm X) \odot \bm X) \leq H(Y | \bm \alpha^{(1)}(\bm X) \odot \bm X).
\end{equation}
Hence according to Equation~\eqref{eq:global},
\begin{equation}
\small
    \mathcal{L}^*_a(\bm \alpha') \leq \mathcal{L}^*_a(\bm \alpha^{(1)}).
\end{equation}
Following the same steps, we can also show that
\begin{equation}
\small
    \mathcal{L}^*_a(\bm \alpha') \leq \mathcal{L}^*_a(\bm \alpha^{(2)}).
\end{equation}
Equation~\eqref{eq:thm3_goal} naturally follows.
\end{proof}


\section{Statistics of the Datasets}
\label{app:data_stats}
Table~\ref{tab:dataset_stats} gives the statistics of both BeerAdvocate and MovieReview.  Please note that all aspects share the same annotation set for the BeerAdvocate dataset. This annotation set is also used in our synthetic settings.
\begin{table}[th!]
\caption{\small{Statistics of the datasets. The three beer aspects share the same annotation set.}}
\label{tab:dataset_stats}
\centering
\small
\begin{tabular}{l||ccc||c}
               & \multicolumn{3}{c||}{BeerAdvocate}                                                      &\multirow{2}{*}{Movie}        \\
               & \multicolumn{1}{l}{Appear.} & \multicolumn{1}{l}{Aroma} & \multicolumn{1}{l||}{Palate} &  \\ 
               \midrule\midrule
Train          & 70,005                      & 61,555                    & 61,244                      & 1,600                \\
Validation     & 8,731                       & 8,797                     & 8,740                       & 200                  \\
Annotation           & 994 & 994 & 994                                                     & 200                  \\
Avg length & 126.8 & 126.8 & 126.8 & 774.8\\ 
Avg rationale length & 22.6 & 18.4 & 13.4 & 145.1\\ 
Avg num of rationale spans &1.6 & 1.4 & 1.1 & 9.0
\end{tabular}
\end{table}

\begin{table}[t!]
\small
\centering
    \begin{tabular}{ll||ccccc||ccccc}
    \parbox[t]{7mm}{\multirow{2}{*}{Aspect}}&\multirow{2}{*}{Setting}& \multicolumn{5}{c||}{\textsc{Rnp}} &\multicolumn{5}{c}{\algnamens}\\
    && Acc & P & R & F1 & $X_1$\%& Acc & P & R & F1 & $X_1$\%\\
    \midrule\midrule
    \parbox[t]{2mm}{\multirow{3}{*}{\rotatebox[origin=c]{30}{Aroma}}}&Biased0.7 & 84.7 & 72.2 & 66.7 & 64.5 & 11.0  & 85.5 & 79.0 & 71.1 & \bf 69.8 &  9.3\\
    &Biased0.75 & 84.4 & 56.9 & 54.0 & 51.6 & 27.3 & 85.3 & 67.1 & 61.2 & \bf 59.5 &  21.1  \\
    &Biased0.8 & 83.3 & 2.5 &6.0&3.3 & 100.0 & 85.8 & 58.1 & 53.3 & \bf 51.5 &  30.6\\
    \midrule\midrule
    \parbox[t]{7mm}{\multirow{3}{*}{\rotatebox[origin=c]{30}{Palate}}}&Biased0.7 & 83.9 & 44.6 & 44.1 & 41.2 & 36.8  &  83.5 & 52.9 & 50.7 & \bf 48.0 &  23.6 \\
    &Biased0.75 & 79.8 & 0.4 & 1.4 & 0.6 & 100.0 &  82.8 & 48.6 & 45.9 & \bf 43.9 &  28.8  \\
    &Biased0.8 & 81.9 & 0.4 & 1.4 & 0.6  & 100.0 &   83.6 & 42.7 & 40.9 & \bf 38.6 &  35.1 \\
    \end{tabular}
    \caption{\small{More results on Beer-Biased.  Compared to Table~\ref{tab:sent_level_biased_additional}, here, testing data are also injected with spurious tokens. }}
    \label{tab:sent_level_biased_additional}
\end{table}


\section{Full Results on the Synthetic Tasks}

In the previous Beer-Biased setting in Section~\ref{ssec:exp_synthetic}, we use the standard annotation data (without adding any spurious tokens) for testing, which made the results directly comparable to others.  Here, we further report the results on the testing data with the same biased pattern applied.  The results are shown in Table~\ref{tab:sent_level_biased_additional}.  As expected, our \algname still demonstrates a significant advantage compared to \textsc{Rnp}.  Compared to Table~\ref{tab:sent_level_skewed}, the absolute F1 scores of our model are reduced a little bit due to the disturbance of the spurious clues.

Figure~\ref{fig:exp_highlight_biased} is the visualization of generated rationales (spurious tokens are indicated as ``[pos]'' and ``[neg]'').  The example in the upper plot is from the aroma-biased0.8 while the one in the lower plot is from the palate-biased0.75.  For both settings, \textsc{Rnp} highlights the first sentence, indicating that it is susceptible to the interlocking convergence path.   In contrast, our \algname selects the sentences that align with the annotations.


\section{Additional Visualization Examples}
\label{app:examples}

We provide additional visualization examples of the the real-world BeerAdvocate setting in
Figure~\ref{fig:exp_highlight}. 
As can be observed, \textsc{Rnp} selects the ``overall'' reviews due to the interlocking dynamics.\footnote{According to the discussion in Section~\ref{ssec:exp_real_world}, when trained on the aroma or the palate aspect, \textsc{Rnp} has the first 7 epochs selecting the ``overall'' reviews for more than 20\% of the samples.}
On the other hand, \algname can select the sentences that align with human rationales in most cases.
The last example in Figure~\ref{fig:exp_highlight} gives a failure case on BeerAdvocate of our approach.
This example has a weak positive opinion on the palate aspect and the true rationale is a less direct review. Therefore both \textsc{Rnp} and our method select wrong sentences.

\floatsetup[table]{capposition=bottom}
\begin{table*}[t!]
	\small
	\begin{tabular}{p{\linewidth}}
		\emph{Beer-Biased0.8 - Aroma Aspect}
        \hspace*{0pt}\hfill Label: Positive (Rate Score 0.8)\\
		\arrayrulecolor{grey}  
		\midrule
		\hlr{\emph{[neg]}}{20}
		\hlr{\emph{cask}}{20} \hlr{\emph{conditioned}}{20} \hlr{\emph{(}}{20} \hlr{\emph{at}}{20} \hlr{\emph{dogfish}}{20} \hlr{\emph{head}}{20} \hlr{\emph{in}}{20} \hlr{\emph{rehoboth}}{20} \hlr{\emph{!}}{20} \hlr{\emph{)}}{20} \hlb{\textbf{into}}{20} \hlb{\textbf{a}}{20} \hlb{\textbf{pint}}{20} \hlb{\textbf{glass}}{20} \hlb{\textbf{appears}}{20} \hlb{\textbf{a}}{20} \hlb{\textbf{dark}}{20} \hlb{\textbf{golden}}{20} \hlb{\textbf{with}}{20} \hlb{\textbf{a}}{20} \hlb{\textbf{finger}}{20} \hlb{\textbf{of}}{20} \hlb{\textbf{foamy}}{20} \hlb{\textbf{head}}{20} \hlb{\textbf{\ul{smells}}}{20} \hlb{\textbf{\ul{of}}}{20} \hlb{\textbf{\ul{bitter}}}{20} \hlb{\textbf{\ul{hops}}}{20} \hlb{\textbf{\ul{with}}}{20} \hlb{\textbf{\ul{little}}}{20} \hlb{\textbf{\ul{malt}}}{20} \hlb{\textbf{tastes}}{20} \hlb{\textbf{great}}{20} \hlb{\textbf{.}}{20} \hlb{a}{0} \hlb{little}{0} \hlb{malty}{0} \hlb{and}{0} \hlb{bready}{0} \hlb{and}{0} \hlb{pleasantly}{0} \hlb{not}{0} \hlb{too}{0} \hlb{effervescent}{0} \hlb{mouthfeel}{0} \hlb{is}{0} \hlb{great}{0} \hlb{too}{0} \hlb{but}{0} \hlb{maybe}{0} \hlb{it}{0} \hlb{'s}{0} \hlb{the}{0} \hlb{cask}{0} \hlb{.}{0} \hlb{really}{0} \hlb{bice}{0} \hlb{creaminess}{0} \hlb{overall}{0} \hlb{a}{0} \hlb{great}{0} \hlb{experience}{0} \hlb{and}{0} \hlb{i}{0} \hlb{'m}{0} \hlb{happy}{0} \hlb{i}{0} \hlb{'m}{0} \hlb{here}{0} \hlb{.}{0} \hlb{a}{0} \hlb{reason}{0} \hlb{to}{0} \hlb{come}{0} \hlb{back}{0}\\
		\vspace{0.2mm}
        \vspace{0.2mm}
		\emph{Beer-Biased0.75 - Palate Aspect}
        \hspace*{0pt}\hfill Label: Positive (Rate Score 0.6)\\
		\arrayrulecolor{grey}  
		\midrule
        \hlr{\emph{[pos]}}{20} \hlr{\emph{very}}{20} \hlr{\emph{dark}}{20} \hlr{\emph{beer}}{20} \hlr{\emph{.}}{20} \hlb{pours}{0} \hlb{a}{0} \hlb{nice}{0} \hlb{finger}{0} \hlb{and}{0} \hlb{a}{0} \hlb{half}{0} \hlb{of}{0} \hlb{creamy}{0} \hlb{foam}{0} \hlb{and}{0} \hlb{stays}{0} \hlb{throughout}{0} \hlb{the}{0} \hlb{beer}{0} \hlb{.}{0} \hlb{smells}{0} \hlb{of}{0} \hlb{coffee}{0} \hlb{and}{0} \hlb{roasted}{0} \hlb{malt}{0} \hlb{.}{0} \hlb{has}{0} \hlb{a}{0} \hlb{major}{0} \hlb{coffee-like}{0} \hlb{taste}{0} \hlb{with}{0} \hlb{hints}{0} \hlb{of}{0} \hlb{chocolate}{0} \hlb{.}{0} \hlb{if}{0} \hlb{you}{0} \hlb{like}{0} \hlb{black}{0} \hlb{coffee}{0} \hlb{,}{0} \hlb{you}{0} \hlb{will}{0} \hlb{love}{0} \hlb{this}{0} \hlb{porter}{0} \hlb{.}{0} \hlb{\textbf{\ul{creamy}}}{20} \hlb{\textbf{\ul{smooth}}}{20} \hlb{\textbf{\ul{mouthfeel}}}{20} \hlb{\textbf{\ul{and}}}{20} \hlb{\textbf{\ul{definitely}}}{20} \hlb{\textbf{\ul{gets}}}{20} \hlb{\textbf{\ul{smoother}}}{20} \hlb{\textbf{\ul{on}}}{20} \hlb{\textbf{\ul{the}}}{20} \hlb{\textbf{\ul{palate}}}{20} \hlb{\textbf{\ul{once}}}{20} \hlb{\textbf{\ul{it}}}{20} \hlb{\textbf{\ul{warms}}}{20} \hlb{\textbf{\ul{.}}}{20} \hlb{it}{0} \hlb{'s}{0} \hlb{an}{0} \hlb{ok}{0} \hlb{porter}{0} \hlb{but}{0} \hlb{i}{0} \hlb{feel}{0} \hlb{there}{0} \hlb{are}{0} \hlb{much}{0} \hlb{better}{0} \hlb{one}{0} \hlb{'s}{0} \hlb{out}{0} \hlb{there}{0} \hlb{.}{0}
	\end{tabular}
	\vspace*{-0.05in}
    \captionof{figure}{\small{Examples of generated rationales on the aroma and palate aspects in the Beer-Biased setting.  Human annotated words are \ul{underlined}. \algname and \textsc{Rnp} rationales are highlighted in \hlb{\textbf{blue}}{20} and \hlr{\emph{red}}{20} colors, respectively. \emph{[pos]} and \emph{[neg]} stand for the special biased symbols we appended with high correlations to the positive and negative classes.}}
    \label{fig:exp_highlight_biased}
\end{table*}
\floatsetup[table]{capposition=top}

\floatsetup[table]{capposition=bottom}
\begin{table*}[t!]
	\small
	\begin{tabular}{p{\linewidth}}
        \emph{BeerAdvocate - Aroma Aspect}
        \hspace*{0pt}\hfill Label: Negative (Rate Score 0.3)\\
		\arrayrulecolor{grey}  
		\midrule
		\hlb{12}{0} \hlb{ounce}{0} \hlb{can}{0} \hlb{dated}{0} \hlb{12256}{0} \hlb{poured}{0} \hlb{at}{0} \hlb{43}{0} \hlb{degrees}{0} \hlb{with}{0} \hlb{slight}{0} \hlb{head}{0} \hlb{that}{0} \hlb{did}{0} \hlb{not}{0} \hlb{have}{0} \hlb{much}{0} \hlb{retention}{0} \hlb{.}{0} \hlb{\textbf{clear}}{20} \hlb{\textbf{,}}{20} \hlb{\textbf{pale}}{20} \hlb{\textbf{and}}{20} \hlb{\textbf{watery}}{20} \hlb{\textbf{with}}{20} \hlb{\textbf{a}}{20} \hlb{\textbf{few}}{20} \hlb{\textbf{small}}{20} \hlb{\textbf{bubbles}}{20} \hlb{\textbf{.}}{20} \hlb{\textbf{\ul{unable}}}{20} \hlb{\textbf{\ul{to}}}{20} \hlb{\textbf{\ul{detect}}}{20} \hlb{\textbf{\ul{any}}}{20} \hlb{\textbf{\ul{malt}}}{20} \hlb{\textbf{\ul{or}}}{20} \hlb{\textbf{\ul{hops}}}{20} \hlb{\textbf{watered}}{20} \hlb{\textbf{down}}{20} \hlb{\textbf{taste}}{20} \hlb{\textbf{of}}{20} \hlb{\textbf{a}}{20} \hlb{\textbf{cold}}{20} \hlb{\textbf{lager}}{20} \hlb{\textbf{.}}{20} \hlb{nothing}{0} \hlb{off}{0} \hlb{but}{0} \hlb{not}{0} \hlb{able}{0} \hlb{to}{0} \hlb{describe}{0} \hlb{anything}{0} \hlb{exciting}{0} \hlb{.}{0} \hlr{\emph{this}}{20} \hlr{\emph{is}}{20} \hlr{\emph{more}}{20} \hlr{\emph{drinkable}}{20} \hlr{\emph{in}}{20} \hlr{\emph{the}}{20} \hlr{\emph{summer}}{20} \hlr{\emph{when}}{20} \hlr{\emph{a}}{20} \hlr{\emph{good}}{20} \hlr{\emph{light}}{20} \hlr{\emph{thirst}}{20} \hlr{\emph{quencher}}{20} \hlr{\emph{might}}{20} \hlr{\emph{be}}{20} \hlr{\emph{needed}}{20} \hlr{\emph{.}}{20}\\
		\vspace{0.2mm}
		\emph{BeerAdvocate - Aroma Aspect}
        \hspace*{0pt}\hfill Label: Positive (Rate Score 0.8)\\
		\arrayrulecolor{grey}  
		\midrule
		\hlb{pours}{0} \hlb{a}{0} \hlb{two}{0} \hlb{finger}{0} \hlb{dark}{0} \hlb{cream}{0} \hlb{head}{0} \hlb{that}{0} \hlb{fades}{0} \hlb{slowly}{0} \hlb{to}{0} \hlb{a}{0} \hlb{thin}{0} \hlb{layer}{0} \hlb{leaving}{0} \hlb{a}{0} \hlb{good}{0} \hlb{lace}{0} \hlb{.}{0} \hlb{deep}{0} \hlb{,}{0} \hlb{clear}{0} \hlb{amber/mahogany}{0} \hlb{color}{0} \hlb{.}{0} \hlb{\textbf{\ul{grapefruit}}}{20} \hlb{\textbf{\ul{hop}}}{20} \hlb{\textbf{\ul{nose}}}{20} \hlb{\textbf{\ul{.}}}{20} \hlb{light-medium}{0} \hlb{carbonation}{0} \hlb{and}{0} \hlb{medium-heavy}{0} \hlb{bodied}{0} \hlb{.}{0} \hlr{\emph{flavor}}{20} \hlr{\emph{is}}{20} \hlr{\emph{malts}}{20} \hlr{\emph{and}}{20} \hlr{\emph{grapefruit}}{20} \hlr{\emph{hops}}{20} \hlr{\emph{that}}{20} \hlr{\emph{are}}{20} \hlr{\emph{really}}{20} \hlr{\emph{well}}{20} \hlr{\emph{balanced}}{20} \hlr{\emph{.}}{20} \hlb{nice}{0} \hlb{imperial}{0} \hlb{black}{0} \hlb{.}{0} \hlb{\$}{0} \hlb{6.49}{0} \hlb{for}{0} \hlb{a}{0} \hlb{22oz}{0} \hlb{bottle}{0} \hlb{from}{0} \hlb{manchester}{0} \hlb{wine}{0} \hlb{and}{0} \hlb{liquors}{0} \hlb{manchester}{0} \hlb{,}{0} \hlb{ct}{0} \hlb{.}{0}\\
		\vspace{0.2mm}
		\emph{BeerAdvocate - Palate Aspect}
        \hspace*{0pt}\hfill Label: Positive (Rate Score 0.8)\\
		\arrayrulecolor{grey}  
		\midrule
        \hlb{22oz}{0} \hlb{bottle}{0} \hlb{pouted}{0} \hlb{into}{0} \hlb{a}{0} \hlb{goblet}{0} \hlb{:}{0} \hlb{opaque}{0} \hlb{orange}{0} \hlb{with}{0} \hlb{a}{0} \hlb{light}{0} \hlb{,}{0} \hlb{white}{0} \hlb{,}{0} \hlb{creamy}{0} \hlb{head}{0} \hlb{that}{0} \hlb{was}{0} \hlb{not}{0} \hlb{all}{0} \hlb{that}{0} \hlb{well}{0} \hlb{retained}{0} \hlb{but}{0} \hlb{full}{0} \hlb{of}{0} \hlb{carbonation}{0} \hlb{,}{0} \hlb{but}{0} \hlb{did}{0} \hlb{settle}{0} \hlb{into}{0} \hlb{a}{0} \hlb{small}{0} \hlb{thin}{0} \hlb{cap}{0} \hlb{.}{0} \hlb{the}{0} \hlb{aroma}{0} \hlb{was}{0} \hlb{more}{0} \hlb{belgian}{0} \hlb{triple}{0} \hlb{than}{0} \hlb{ipa}{0} \hlb{,}{0} \hlb{sweet}{0} \hlb{and}{0} \hlb{malty}{0} \hlb{the}{0} \hlb{taste}{0} \hlb{is}{0} \hlb{a}{0} \hlb{very}{0} \hlb{nice}{0} \hlb{balance}{0} \hlb{of}{0} \hlb{the}{0} \hlb{two}{0} \hlb{styles}{0} \hlb{.}{0} \hlr{\emph{a}}{20} \hlr{\emph{little}}{20} \hlr{\emph{more}}{20} \hlr{\emph{hops}}{20} \hlr{\emph{,}}{20} \hlr{\emph{but}}{20} \hlr{\emph{balanced}}{20} \hlr{\emph{very}}{20} \hlr{\emph{nice}}{20} \hlr{\emph{with}}{20} \hlr{\emph{the}}{20} \hlr{\emph{sweetness}}{20} \hlr{\emph{of}}{20} \hlr{\emph{the}}{20} \hlr{\emph{malt}}{20} \hlr{\emph{and}}{20} \hlr{\emph{fruit}}{20} \hlr{\emph{.}}{20} \hlb{\ul{the}}{0} \hlb{\ul{beer}}{0} \hlb{\ul{had}}{0} \hlb{\ul{a}}{0} \hlb{\ul{medium}}{0} \hlb{\ul{to}}{0} \hlb{\ul{full}}{0} \hlb{\ul{body}}{0} \hlb{\ul{,}}{0} \hlb{\ul{perhaps}}{0} \hlb{\ul{a}}{0} \hlb{\ul{little}}{0} \hlb{\ul{too}}{0} \hlb{\ul{thick}}{0} \hlb{\ul{for}}{0} \hlb{\ul{my}}{0} \hlb{\ul{taste}}{0} \hlb{\ul{,}}{0} \hlb{\ul{but}}{0} \hlb{\ul{still}}{0} \hlb{\ul{good}}{0} \hlb{\ul{.}}{0} \hlb{\textbf{\ul{the}}}{20} \hlb{\textbf{\ul{beer}}}{20} \hlb{\textbf{\ul{had}}}{20} \hlb{\textbf{\ul{a}}}{20} \hlb{\textbf{\ul{nice}}}{20} \hlb{\textbf{\ul{bitter}}}{20} \hlb{\textbf{\ul{dry}}}{20} \hlb{\textbf{\ul{aftertaste}}}{20} \hlb{\textbf{\ul{and}}}{20} \hlb{\textbf{\ul{was}}}{20} \hlb{\textbf{\ul{well}}}{20} \hlb{\textbf{\ul{carbonated}}}{20} \hlb{\textbf{\ul{.}}}{20} \hlb{the}{0} \hlb{beer}{0} \hlb{was}{0} \hlb{fairly}{0} \hlb{easy}{0} \hlb{to}{0} \hlb{drink}{0} \hlb{give}{0} \hlb{the}{0} \hlb{abv}{0} \hlb{,}{0} \hlb{but}{0} \hlb{after}{0} \hlb{the}{0} \hlb{22oz}{0} \hlb{,}{0} \hlb{i}{0} \hlb{was}{0} \hlb{pretty}{0} \hlb{well}{0} \hlb{done}{0} \hlb{.}{0} \hlb{overall}{0} \hlb{,}{0} \hlb{a}{0} \hlb{good}{0} \hlb{beer}{0} \hlb{and}{0} \hlb{probably}{0} \hlb{the}{0} \hlb{first}{0} \hlb{one}{0} \hlb{of}{0} \hlb{the}{0} \hlb{side}{0} \hlb{projects}{0} \hlb{that}{0} \hlb{i}{0} \hlb{think}{0} \hlb{the}{0} \hlb{brewery}{0} \hlb{should}{0} \hlb{consider}{0} \hlb{brewing}{0} \hlb{on}{0} \hlb{a}{0} \hlb{regular}{0} \hlb{basis}{0} \hlb{.}{0}\\
        \vspace{0.2mm}
		\emph{BeerAdvocate - Palate Aspect}
        \hspace*{0pt}\hfill Label: Positive (Rate Score 0.7)\\
		\arrayrulecolor{grey}  
		\midrule
        \hlr{\emph{cloudy}}{20} \hlr{\emph{yellow}}{20} \hlr{\emph{in}}{20} \hlr{\emph{color}}{20} \hlr{\emph{w/}}{20} \hlr{\emph{a}}{20} \hlr{\emph{thick}}{20} \hlr{\emph{head}}{20} \hlr{\emph{that}}{20} \hlr{\emph{is}}{20} \hlr{\emph{n't}}{20} \hlr{\emph{quite}}{20} \hlr{\emph{as}}{20} \hlr{\emph{well}}{20} \hlr{\emph{retained}}{20} \hlr{\emph{compared}}{20} \hlr{\emph{to}}{20} \hlr{\emph{other}}{20} \hlr{\emph{hefeweizens}}{20} \hlr{\emph{.}}{20} \hlb{.}{0} \hlb{tart}{0} \hlb{wheat}{0} \hlb{notes}{0} \hlb{w/}{0} \hlb{mild}{0} \hlb{banana}{0} \hlb{\&}{0} \hlb{bubblegum}{0} \hlb{yeast}{0} \hlb{esters}{0} \hlb{in}{0} \hlb{aroma}{0} \hlb{.}{0} \hlb{\textbf{the}}{20} \hlb{\textbf{aroma}}{20} \hlb{\textbf{is}}{20} \hlb{\textbf{n't}}{20} \hlb{\textbf{as}}{20} \hlb{\textbf{complex}}{20} \hlb{\textbf{as}}{20} \hlb{\textbf{other}}{20} \hlb{\textbf{examples}}{20} \hlb{\textbf{of}}{20} \hlb{\textbf{the}}{20} \hlb{\textbf{style}}{20} \hlb{\textbf{,}}{20} \hlb{\textbf{and}}{20} \hlb{\textbf{is}}{20} \hlb{\textbf{akin}}{20} \hlb{\textbf{to}}{20} \hlb{\textbf{a}}{20} \hlb{\textbf{more}}{20} \hlb{\textbf{tart}}{20} \hlb{\textbf{,}}{20} \hlb{\textbf{but}}{20} \hlb{\textbf{less}}{20} \hlb{\textbf{estery}}{20} \hlb{\textbf{paulaner}}{20} \hlb{\textbf{aroma}}{20} \hlb{\textbf{.}}{20} \hlb{flavorwise}{0} \hlb{,}{0} \hlb{yeast}{0} \hlb{contributions}{0} \hlb{are}{0} \hlb{subdued}{0} \hlb{for}{0} \hlb{style}{0} \hlb{,}{0} \hlb{but}{0} \hlb{musty}{0} \hlb{and}{0} \hlb{light}{0} \hlb{banana}{0} \hlb{flavors}{0} \hlb{are}{0} \hlb{present}{0} \hlb{.}{0} \hlb{grainy}{0} \hlb{,}{0} \hlb{tart}{0} \hlb{wheat}{0} \hlb{flavors}{0} \hlb{assert}{0} \hlb{themselves}{0} \hlb{since}{0} \hlb{they}{0} \hlb{'re}{0} \hlb{not}{0} \hlb{overpowered}{0} \hlb{by}{0} \hlb{yeast}{0} \hlb{esters}{0} \hlb{.}{0} \hlb{\ul{the}}{0} \hlb{\ul{finish}}{0} \hlb{\ul{consists}}{0} \hlb{\ul{of}}{0} \hlb{\ul{residual}}{0} \hlb{\ul{sweetness}}{0} \hlb{\ul{and}}{0} \hlb{\ul{a}}{0} \hlb{\ul{hint}}{0} \hlb{\ul{of}}{0} \hlb{\ul{a}}{0} \hlb{\ul{grainy}}{0} \hlb{\ul{note}}{0} \hlb{\ul{.}}{0} \hlb{like}{0} \hlb{other}{0} \hlb{pinkus}{0} \hlb{brews}{0} \hlb{,}{0} \hlb{it}{0} \hlb{'s}{0} \hlb{stylistically}{0} \hlb{odd}{0} \hlb{,}{0} \hlb{but}{0} \hlb{flavorful}{0} \hlb{enough}{0} \hlb{to}{0} \hlb{warrant}{0} \hlb{a}{0} \hlb{taste}{0} \hlb{.}{0}
	\end{tabular}
	\vspace*{-0.05in}
    \captionof{figure}{\small{Examples of generated rationales on the aroma and palate aspects of the conventional beer review task.  Human annotated words are \ul{underlined}. \algname and \textsc{Rnp} rationales are highlighted in \hlb{\textbf{blue}}{20} and \hlr{\emph{red}}{20} colors, respectively.}}
    \label{fig:exp_highlight}
\end{table*}
\floatsetup[table]{capposition=top}

\end{document}